\newcommand{\Var}{\mathsf{Var}}
\newcommand{\indc}{\mathbf{1}}
\newcommand{\calF}{{\mathcal{F}}}
\newtheorem{theorem}{Theorem}
\newtheorem{lemma}{Lemma}
\newtheorem{remark}{Remark}
\begin{document}

\title{Approximation by Combinations of ReLU and Squared ReLU Ridge Functions with $ \ell^1 $ and $ \ell^0 $ Controls}

\author{Jason~M.~Klusowski \and
    Andrew~R.~Barron
\thanks{Jason M. Klusowski and Andrew R. Barron are with the Department of Statistics and Data Science, Yale University, New Haven, CT, USA, 06511, e-mail: \{jason.klusowski, andrew.barron\}@yale.edu.}
\thanks{}}

\maketitle

\begin{abstract}
We establish $ L^{\infty} $ and $ L^2 $ error bounds for functions of many variables that are approximated by linear combinations of ReLU (rectified linear unit) and squared ReLU ridge functions with $ \ell^1 $ and $ \ell^0 $ controls on their inner and outer parameters. With the squared ReLU ridge function, we show that the $ L^2 $ approximation error is inversely proportional to the inner layer $ \ell^0 $ sparsity and it need only be sublinear in the outer layer $ \ell^0 $ sparsity. Our constructions are obtained using a variant of the Jones-Barron probabilistic method, which can be interpreted as either stratified sampling with proportionate allocation or two-stage cluster sampling. We also provide companion error lower bounds that reveal near optimality of our constructions. Despite the sparsity assumptions, we showcase the richness and flexibility of these ridge combinations by defining a large family of functions, in terms of certain spectral conditions, that are particularly well approximated by them. 
%Finally, we discuss how incorporating sparsity into the model can yield estimators with small mean squared prediction error, even in %the high-dimensional regime, where the number of predictors far exceeds the available sample size. 
\end{abstract}

\section{Introduction}

Functions of many variables are approximated using linear combinations of ridge functions with one layer of nonlinearities, viz.,
\begin{equation} \label{eq:linear}
f_m(x) = \sum_{k=1}^m  b_k \phi( a_k \cdot x - t_k),
\end{equation}
where $ b_k \in \mathbb{R} $ are the outer layer parameters and $ a_k \in \mathbb{R}^d $ are the vectors of inner parameters for the single-hidden layer of functions $ \phi( a_k \cdot x - t_k) $. The activation function $ \phi $ is allowed to be quite general. For example, it can be bounded and Lipschitz, polynomials with certain controls on their degrees, or bounded with jump discontinuities. When the ridge activation function is a sigmoid, \prettyref{eq:linear} is single-hidden layer artificial neural network.

One goal in a statistical setting is to estimate a regression function, i.e., conditional mean response, $ f(x) = \mathbb{E}[Y \mid X = x] $ with domain $ D \triangleq [-1,1]^d $ from noisy observations $ \{(X_i,Y_i)\}_{i=1}^n $, where $ Y = f(X) + \varepsilon $. In classical literature \cite{Barron1994}, $ L^2(P) $ mean squared prediction error of order $ (d/n)^{1/2} $, achieved by $ \ell^1 $ penalized least squares estimators\footnote{That is, the fit minimizes $ (1/n) \sum_{i=1}^n (f_m(X_i) - Y_i)^2 + \lambda \sum_{k=1}^m |b_k| $ for some appropriately chosen $ \lambda > 0 $.} over the class of models \prettyref{eq:linear}, are obtained by optimizing the tradeoff between \emph{approximation error} and \emph{descriptive complexity relative to sample size}. Bounds on the approximation error are obtained by first showing how models of the form \prettyref{eq:linear} with $ \phi(z) = \indc\{z > 0\} $ can be used to approximate $ f $ satisfying $ \int_{\mathbb{R}^d}\|\omega\|_1|\calF(f)(\omega)|d\omega < +\infty $, provided $ f $ admits a Fourier representation $ f(x) = \int_{\mathbb{R}^d}e^{ix\cdot\omega}\calF(f)(\omega)d\omega $ on $ [-1,1]^d $. Because it is often difficult to work with discontinuous $ \phi $ (i.e., vanishing or exploding gradient issues), these step functions are replaced with smooth $ \phi $ such that $ \phi(\tau z) \wedge 1 \rightarrow \indc\{z > 0\} $ as $ \tau \rightarrow +\infty $. Thus, this setup allows one to work with approximants of the form \prettyref{eq:linear} with smooth $ \phi $, but at the expense of \emph{unbounded} $ \ell^1 $ norm $ \| a_k\|_1 $.

Like high-dimensional linear regression \cite{Wainwright2011}, many applications of statistical inference and estimation require a setting where $ d \gg n $. In contrast to the aforementioned mean square prediction error of $ (d/n)^{1/2} $, it has been shown \cite{Klusowski2018} how models of the form \prettyref{eq:linear} with Lipschitz\footnote{Henceforth, when we say a function is Lipschitz, we assume it has bounded Lipschitz parameter.} $ \phi $ (reps. Lipschitz derivative $ \phi' $) and \emph{bounded} inner parameters $ \| a_k\|_0 $ and $ \|a_k\|_1 $ can be used to give desirable $ L^2(D) $ mean squared prediction error of order $ ((\log d)/n)^{1/3} $ (resp. $ ((\log d)/n)^{2/5} $), also achieved by penalized estimators.\footnote{With additional $ \ell^0 $ inner sparsity, we might also consider an estimator that minimizes $ (1/n) \sum_{i=1}^n (f_m(X_i) - Y_i)^2 + \lambda_0\psi\left(\sum_{k=1}^m |b_k|\|a_k\|_0\right) $ for some convex function $ \psi $ and appropriately chosen $ \lambda_0 > 0 $.} In fact, \cite{Klusowski2017} shows that these rates are nearly optimal. A few natural questions arise from restricting the $ \ell^0 $ and $ \ell^1 $ norms of the inner parameters in the model:
 
\begin{itemize}
\item To what degree do the sparsity assumptions limit the flexibility of the model \prettyref{eq:linear}?
\item What condition can be imposed on $ f $ so that it can be approximated by $ f_m $ with Lipschitz $ \phi $ (or Lipschitz derivative $ \phi' $) and bounded $ \| a_k\|_0 $ and / or  $ \| a_k\|_1 $?
\item How well can $ f $ be approximated by $ f_m $, given these sparsity constraints?
\end{itemize}

According to classic approximation results \cite{Barron1992, Barron1993}, if the domain of $ f $ is contained in $ [-1,1]^d $ and $ f $ admits a Fourier representation $ f(x) = \int_{\mathbb{R}^d}e^{ix\cdot\omega}\calF(f)(\omega)d\omega $, then the spectral condition $ v_{f,1} < +\infty $, where $ v_{f,s} \triangleq \int_{\mathbb{R}^d}\|\omega\|^{s}_1|\calF(f)(\omega)|d\omega $, is enough to ensure that $ f - f(0) $ can be approximated in $ L^{\infty}(D) $ by equally weighted, i.e, $ | b_1| = \cdots = | b_m| $, linear combinations of functions of the form \prettyref{eq:linear} with $ \phi(z) = \indc\{z > 0\} $. Typical $ L^{\infty} $ error rates $ \| f - f_m\|_{\infty} $ of an $ m $-term approximation \prettyref{eq:linear} are at most $ cv_{f,1}\sqrt{d}\ m^{-1/2} $, where $ c $ is a universal constant \cite{Barron1992, Cheang2000, Yukich1995}. A rate of $ c(p) v_{f,1}m^{-1/2-1/(pd)} $ was given in \cite[Theorem 3]{Makovoz1996} for $ L^p(D) $ for nonnegative even integer $ p $. Again, all these bounds are valid when the step activation function is replaced by a smooth approximant $ \phi $ (in particular, \emph{any} sigmoid satisfying $ \lim_{z\rightarrow \pm \infty}\phi(z) = \pm 1 $), but at the expense of unbounded $ \| a_k\|_1 $.

Towards giving partial answers to the questions we posed, in \prettyref{sec:uniform}, we show how functions of the form \prettyref{eq:linear} with ReLU (also known as a ramp or first order spline) $ \phi(z) = (z)_{+} = 0 \vee z $ (which is Lipchitz)\footnote{It is perhaps more conventional to write $ (z)^{+} $ for $ 0 \vee z $, however, to avoid clutter in the exponent, we use the current notation.} or squared ReLU $ \phi(z) = (z)^2_{+} $ (which has Lipschitz derivative) activation function can be used  to give desirable $ L^{\infty}(D) $ approximation error bounds, even when $ \| a_k\|_1 = 1 $, $ 0 \leq t_k \leq 1 $, and $ | b_1| = \dots = | b_m| $. Because of the widespread popularity of the ReLU activation function and its variants, these simpler forms may also be of independent interest for computational and algorithmic reasons as in \cite{Bartlett1998, Montanari2017, Janzamin2015, Zhang2015, Globerson2017}, to name a few.

Unlike the case with step activation functions, our analysis makes no use of the combinatorial properties of half-spaces as in Vapnik-Chervonenkis theory \cite{Vapnik1971, Haussler1995}. The $ L^2(D) $ case for ReLU ridge functions (also known as hinging hyperplanes) with $ \ell^1 $-bounded inner parameters was considered in \cite[Theorem 3]{Breiman1993} and our $ L^{\infty}(D) $ bounds improve upon that line of work and, in addition, increase the exponent from $ 1/2 $ to $ 1/2 + O(1/d) $. Our proof techniques are substantively different than \cite{Breiman1993} and, importantly, are more amenable to empirical process theory, which is the key to showing our error bounds.

These tighter rates of approximation, with ReLU and squared ReLU activation functions, are possible under two different conditions -- finite $ v_{f,2} $ or $ v_{f,3} $, respectively. 
%Interestingly, there is a disparity in the quality and proof technique of the upper bounds depending on the form of the weights $ % b_k $ and degree of smoothness of the activation function. 
The main idea we use originates from \cite{Makovoz1996} and \cite{Makovoz1998} and can be seen as stratified sampling with proportionate allocation. This technique is widely applied in survey sampling as a means of variance reduction \cite{Neyman1934}.

At the end of \prettyref{sec:uniform}, we will also discuss the degree to which these bounds can be improved by providing companion lower bounds on the minimax rates of approximation. 

\prettyref{sec:squared} will focus on how accurate estimation can be achieved even when $ \| a_k\|_0 $ is also bounded. In particular, we show how an $ m $-term linear combination \prettyref{eq:linear} with $ \| a_k\|_0 \leq \sqrt{m} $ and $ \| a_k\|_1 = 1 $ can approximate $ f $ satisfying $ v_{f, 3} < +\infty $ in $ L^2(D) $ with error at most $ \sqrt{2}v_{f, 3}m^{-1/2} $. In other words, the $ L^2(D) $ approximation error is inversely proportional to the inner layer sparsity and it need only be sublinear in the outer layer sparsity. The constructions that achieve these error bounds are obtained using a variant of the Jones-Barron probabilistic method, which can be interpreted as two-stage cluster sampling.

%Finally, in \prettyref{sec:learning}, we will briefly discuss how our approximation techniques can be carried over to the statistical %estimation setting.

Throughout this paper, we will state explicitly how our bounds depend on $ d $ so that the reader can fully appreciate the complexity of approximation. If $ a $ is a vector in Euclidean space, we use the notation $ a(k) $ to denote its $ k $-th component.

\section{$ L^{\infty} $ approximation with bounded $ \ell^1 $ norm} \label{sec:uniform}
\subsection{Positive results}
In this section, we provide the statements and proofs of the existence results for $ f_m $ with bounded $ \ell^1 $ norm of inner parameters. We would like to point out that the results of \prettyref{thm:general} hold when all occurrences of the ReLU or squared ReLU activation functions are replaced by general $ \phi $ which is Lipschitz or has Lipschitz derivative $ \phi^{\prime} $, respectively.
\begin{theorem} \label{thm:general}
Suppose $ f $ admits an integral representation
\begin{equation} \label{eq:integral}
f(x) = v\int_{[0,1]\times \{ a:\| a\|_1 = 1\}}\eta(t, a) \ ( a\cdot x - t)^{s-1}_{+}dP(t, a),
\end{equation}
for $ x $ in $ D = [-1,1]^d $ and $ s \in \{2, 3\} $, where $ P $ is a probability measure on $ [0,1]\times \{ a \in \mathbb{R}^d:\| a\|_1 = 1\}  $ and $ \eta(t, a) $ is either $ -1 $ or $ +1 $. There exists a linear combination of ridge functions of the form
\begin{equation}
 f_m(x) =  \frac{v}{m}\sum_{k=1}^m b_k(a_k \cdot x-t_k)^{s-1}_{+},
\end{equation}
with $  b_k \in [-1,1] $, $ \| a_k\|_1 = 1 $, $ 0 \leq t_k \leq 1 $ such that
\begin{equation*}
\sup_{x\in D}|f(x)-f_m(x)| \leq c\sqrt{d + \log m}\ m^{-1/2-1/d}, \quad s = 2,
\end{equation*}
and
\begin{equation*}
\sup_{x\in D}|f(x)-f_m(x)| \leq c\sqrt{d}\ m^{-1/2-1/d}, \quad s = 3,
\end{equation*}
for some universal constant $ c > 0 $. Furthermore, if the $  b_k $ are restricted to $ \{-1,1\} $, the upper bound is of order
\begin{equation*}
\sqrt{d + \log m}\ m^{-1/2-1/(d+2)}, \quad s = 2
\end{equation*}
and
\begin{equation*}
\sqrt{d}\ m^{-1/2-1/(d+2)}, \quad s = 3.
\end{equation*}
\end{theorem}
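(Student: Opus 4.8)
The plan is to prove the theorem by the probabilistic (Jones--Barron) method in its stratified form: I will exhibit a \emph{random} approximant $f_m$ of the prescribed shape whose expected uniform error $\Expect[\sup_{x\in D}|f(x)-f_m(x)|]$ is at most the claimed bound, so that some realization of the random parameters $(b_k,a_k,t_k)$ must do at least as well. Writing $\Theta=[0,1]\times\{a:\|a\|_1=1\}$ for the parameter space and $g_\theta(x)=(a\cdot x-t)^{s-1}_+$ for $\theta=(t,a)$, the representation \prettyref{eq:integral} reads $f=v\,\Expect_{\theta\sim P}[\eta(\theta)g_\theta]$, so any scheme that draws points and sets $b_k=\eta(\theta_k)$ while reproducing $P$ in expectation yields an unbiased $f_m$. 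The gain over the crude i.i.d. scheme, which only gives $m^{-1/2}$, comes from \emph{stratifying} $\Theta$ so that the integrand varies little across each sampled cell.

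Concretely, I would first split $P$ according to the sign of $\eta$ so that $\eta$ is constant on each piece, then cover $\Theta$ by $N\asymp m$ cells $\{A_j\}$ of diameter $\delta\asymp m^{-1/d}$ --- possible because $\Theta$ is $d$-dimensional, the $\ell^1$-sphere contributing $d-1$ directions and $t$ one more. Proportionate allocation assigns cell $A_j$ about $mP(A_j)$ points drawn from $P(\cdot\mid A_j)$, all with coefficient $\eta_j\in\{-1,1\}$ except for a single fractional coefficient in $[-1,1]$ per cell that makes the cell mass match $P(A_j)$ exactly, so that $f_m$ is exactly unbiased and of the required form. The crucial deterministic input is that $\theta\mapsto g_\theta(x)$ is Lipschitz for $s=2$ and has a Lipschitz gradient for $s=3$, uniformly over $x\in D$, whence the oscillation of $g_\theta(x)$ across each cell is $O(\delta)$. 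Centering each term against its cell mean then exhibits $f_m(x)-f(x)$ as a sum of $m$ independent, mean-zero contributions of size $O(\delta)$, so its pointwise standard deviation is of order $v\,\delta\,m^{-1/2}=v\,m^{-1/2-1/d}$.

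The main obstacle is upgrading this pointwise control to the supremum over the continuum $x\in D$. Here I would symmetrize and run a chaining / maximal-inequality argument for the centered process $x\mapsto f_m(x)-f(x)$, whose increments are governed by the $\ell^1$-geometry of the inner weights; the index set is covered at scale $\varepsilon$ by $(C/\varepsilon)^d$ balls, so Dudley's entropy integral contributes a factor $\sqrt d$ on top of the envelope $O(\delta)$ and the $m^{-1/2}$ from averaging, reproducing $\sqrt d\,m^{-1/2-1/d}$ for $s=3$. For $s=2$ the kink of the ReLU forces the chaining to be truncated at the resolution of the $m$ sampled ridges (equivalently, a union bound over the active pieces), which is exactly what produces the extra $\sqrt{\log m}$ and yields $\sqrt{d+\log m}\,m^{-1/2-1/d}$; the additional smoothness of the squared ReLU removes this logarithm. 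Taking expectations and invoking the probabilistic method gives the first two displayed bounds.

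Finally, for the restriction $b_k\in\{-1,1\}$ the fractional coefficients are unavailable, so proportionate allocation must be realized by integer \emph{counts} of equally weighted atoms. A naive rounding of $mP(A_j)$ accumulates bias across cells and provably degrades the rate, so I would instead enforce an \emph{exactly} balanced partition: augment $\Theta$ with auxiliary uniform coordinates and stratify the enlarged product space into equal-measure cells of small $\theta$-diameter (the ``two-stage'' flavor of the sampling). Keeping the cells simultaneously exactly balanced and small in $\theta$ for an arbitrary $P$ is what costs two extra effective dimensions, so the optimal diameter degrades to $\delta\asymp m^{-1/(d+2)}$ and the exponent becomes $-1/2-1/(d+2)$, with the $\sqrt{d+\log m}$ versus $\sqrt d$ dichotomy between $s=2$ and $s=3$ unchanged. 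I expect the delicate point to be precisely this balancing --- showing the augmentation preserves exact unbiasedness while holding the within-cell oscillation at $O(\delta)$ --- which is what pins down the increment ``$+2$''.
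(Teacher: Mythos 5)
Your treatment of the general case $ b_k\in[-1,1] $ is essentially the paper's argument: stratify $ \{\pm1\}\times[0,1]\times\{a:\|a\|_1=1\} $ into $ M\asymp\epsilon^{-d} $ cells on which $ (t,a)\mapsto \eta(t,a)(a\cdot x-t)^{s-1}_{+} $ oscillates by at most $ \epsilon $ uniformly in $ x $, allocate $ \lceil mP(A_j)\rceil $ samples to cell $ A_j $ with fractional coefficients chosen so that each cell's total mass is matched exactly (the paper spreads the fraction uniformly over the cell's terms rather than putting it on one term, a cosmetic difference), and control $ \sup_{x\in D} $ of the centered sum by symmetrization plus Dudley's entropy integral. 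Your explanation of the $ \sqrt{d+\log m} $ versus $ \sqrt{d} $ dichotomy is also the right mechanism: for $ s=2 $ the centered within-cell differences have sup norm $ O(\epsilon) $ but Lipschitz constant $ O(1) $ in $ x $, so the entropy integral picks up $ \sqrt{\log(1/\epsilon)} $, whereas for $ s=3 $ the Lipschitz constant is itself $ O(\epsilon) $ and the logarithm disappears. With $ M\asymp m $ and $ \epsilon\asymp m^{-1/d} $ this yields the first two bounds.

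The equally weighted case is where your proposal has a genuine gap. You correctly observe that deterministic rounding of the cell counts incurs an aggregate allocation error of order $ M/m $ that destroys the rate, but the fix you propose --- an \emph{exactly} equal-measure partition of an augmented space whose cells have $ \theta $-diameter $ m^{-1/(d+2)} $ --- is asserted rather than constructed, and it is not clear that it exists for arbitrary $ P $: a cell of measure exactly $ 1/m $ can be no smaller than the smallest ball of that measure, and an adversarial $ P $ (e.g., a small amount of mass spread thinly over the whole parameter space) forces many such cells to have diameter much larger than $ m^{-1/(d+2)} $; the auxiliary uniform coordinate only helps split atoms, not diffuse mass. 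The paper's device is different and is the key idea your proposal is missing: round each count \emph{randomly}, taking $ m_k\in\{\lfloor mL_k\rfloor,\lceil mL_k\rceil\} $ with mean $ mL_k $, so the approximant remains exactly unbiased; the price is the allocation-fluctuation term $ \frac{v}{m}\sup_{x\in D}|\sum_{k=1}^M(m_k-mL_k)\mathbb{E}_{P_k}h(x)| $, which is a sum of $ M $ independent, bounded, mean-zero summands and hence of order $ v\sqrt{dM}/m $ after a second symmetrization/Dudley bound --- i.e., $ \sqrt{M} $ rather than $ M $. Balancing $ \sqrt{dM}/m $ against the within-cell term $ \sqrt{d}\,\epsilon\, m^{-1/2} $ subject to $ M\asymp\epsilon^{-d} $ forces $ M\asymp m\epsilon^2 $ and hence $ \epsilon\asymp m^{-1/(d+2)} $; this balance, not a balanced-partition construction, is where the ``$+2$'' comes from. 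Without this step (or an actual existence proof for your balanced partition with the claimed diameter), the $ m^{-1/2-1/(d+2)} $ bound for $ b_k\in\{-1,1\} $ is not established.
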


\begin{theorem} \label{thm:ramp}
Let $ D = [-1,1]^d $. Suppose $ f $ admits a Fourier representation $ f(x) = \int_{\mathbb{R}^d}e^{ix\cdot\omega}\calF(f)(\omega)d\omega $ and
\begin{equation*} v_{f,2} = \int_{\mathbb{R}^d}\|\omega\|^2_1|\calF(f)(\omega)|d\omega < +\infty. 
\end{equation*}
There exists a linear combination of ReLU ridge functions of the form
\begin{equation} \label{thm:ramp_equal}
 f_m(x) =  b_0 + a_0\cdot x + \frac{v}{m}\sum_{k=1}^m b_k(a_k \cdot x-t_k)_{+}
\end{equation}
 with $  b_k \in [-1,1] $, $ \| a_k\|_1 = 1 $, $ 0 \leq t_k \leq 1 $, $  b_0 = f(0) $, $  a_0 = \nabla f(0) $, and $ v \leq 2v_{f,2} $ such that
\begin{equation*}
\sup_{x\in D}|f(x)-f_m(x)| \leq cv_{f,2}\sqrt{d + \log m}\ m^{-1/2-1/d},
\end{equation*}
for some universal constant $ c > 0 $. Furthermore, if the $  b_k $ are restricted to $ \{-1,1\} $, the upper bound is of order
\begin{equation*}
v_{f,2}\sqrt{d + \log m}\ m^{-1/2-1/(d+2)}.
\end{equation*}
\end{theorem}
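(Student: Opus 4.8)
The plan is to deduce \prettyref{thm:ramp} from \prettyref{thm:general} by converting the spectral hypothesis $v_{f,2} < +\infty$ into an integral representation of the form \prettyref{eq:integral} with $s = 2$ for the residual $g(x) \triangleq f(x) - f(0) - \nabla f(0)\cdot x$, after which the approximation bound is simply inherited. Since $f$ is real, I would first write $f(x) = \int_{\mathbb{R}^d}\cos(x\cdot\omega + \theta(\omega))\,|\calF(f)(\omega)|\,d\omega$, where $\theta(\omega)$ is the phase of $\calF(f)(\omega)$. Differentiating under the integral sign (justified by finiteness of the relevant spectral moments, with the low-frequency part controlled by $\calF(f)\in L^1$) gives $f(0) = \int\cos\theta\,|\calF(f)|\,d\omega$ and $\nabla f(0) = -\int\omega\sin\theta\,|\calF(f)|\,d\omega$, so that
\[
g(x) = \int_{\mathbb{R}^d}\left[\cos(x\cdot\omega + \theta) - \cos\theta + (x\cdot\omega)\sin\theta\right]|\calF(f)(\omega)|\,d\omega.
\]
The bracketed quantity is exactly the second-order Taylor remainder of $u\mapsto\cos(u+\theta)$ evaluated at $u = x\cdot\omega$ about $u = 0$, which I would write in integral-remainder form as $-\int_0^{x\cdot\omega}(x\cdot\omega - u)\cos(u+\theta)\,du$.

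Next I would turn this scalar remainder into a superposition of ramp (ReLU) ridge functions. Because $|x\cdot\omega|\le\|\omega\|_1$ on $D$, I can extend the inner integral to $u\in[-\|\omega\|_1,\|\omega\|_1]$ via the identity, valid for all $z$ with $|z|\le\|\omega\|_1$,
\[
\int_0^{z}(z-u)\cos(u+\theta)\,du = \int_0^{\|\omega\|_1}(z-u)_{+}\cos(u+\theta)\,du + \int_{-\|\omega\|_1}^{0}(u-z)_{+}\cos(u+\theta)\,du.
\]
Writing $a = \omega/\|\omega\|_1$ (so $\|a\|_1 = 1$) and rescaling $u = \|\omega\|_1 t$ converts the first piece into $\|\omega\|_1^2\int_0^1 (a\cdot x - t)_{+}\cos(\|\omega\|_1 t + \theta)\,dt$ and the second into $\|\omega\|_1^2\int_{-1}^0 (t - a\cdot x)_{+}\cos(\|\omega\|_1 t + \theta)\,dt$; the change of variables $a\mapsto -a$, $t\mapsto -t$ on the second piece puts it in the same $(a'\cdot x - t')_{+}$ form with $\|a'\|_1 = 1$ and $t'\in[0,1]$. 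Collecting the magnitudes $|\cos(\|\omega\|_1 t + \theta)|\,\|\omega\|_1^2|\calF(f)(\omega)|$ into a normalized probability measure $P$ on $[0,1]\times\{a:\|a\|_1 = 1\}$, with $\eta(t,a)\in\{-1,+1\}$ the residual sign and $v$ the total mass, yields exactly \prettyref{eq:integral} with $s = 2$. Because each of the two $t$-intervals has length one and $|\cos|\le 1$, the mass obeys $v \le 2\int_{\mathbb{R}^d}\|\omega\|_1^2|\calF(f)(\omega)|\,d\omega = 2v_{f,2}$, which is the advertised bound on $v$.

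With $g$ in the form \prettyref{eq:integral}, I would invoke \prettyref{thm:general} with $s = 2$ to produce $g_m(x) = \tfrac{v}{m}\sum_{k=1}^m b_k(a_k\cdot x - t_k)_{+}$ satisfying both stated sup-norm bounds (for $b_k\in[-1,1]$ and, with the improved exponent, for $b_k\in\{-1,1\}$), and then set $f_m = f(0) + \nabla f(0)\cdot x + g_m$, so that $b_0 = f(0)$, $a_0 = \nabla f(0)$, and $\sup_D|f - f_m| = \sup_D|g - g_m|$. Since the target, the residual, and their approximants all scale linearly in the representation mass $v$, the error carried over from \prettyref{thm:general} is proportional to $v\le 2v_{f,2}$, which supplies the factor $v_{f,2}$ in the final estimates. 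I expect the representation step, not the approximation step, to be the main obstacle: one must justify the interchange of differentiation and integration and the integral Taylor remainder \emph{uniformly in} $x\in D$, and carefully bookkeep the sign $\eta$ together with the normalization so that $P$ is a genuine probability measure supported where $\|a\|_1 = 1$ and $0\le t\le 1$. Once \prettyref{eq:integral} is established, \prettyref{thm:general} can be applied as a black box.
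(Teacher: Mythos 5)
Your proposal is correct and follows essentially the same route as the paper: the paper's key identity $-\int_0^c[(z-u)_+e^{iu}+(-z-u)_+e^{-iu}]\,du = e^{iz}-iz-1$ is exactly your Taylor-remainder formula for $u\mapsto\cos(u+\theta)$ written in complex form, and the subsequent rescaling $u=\|\omega\|_1 t$, $a=\omega/\|\omega\|_1$, sign bookkeeping via $\eta$, normalization into a probability measure with mass $v\le 2v_{f,2}$, and black-box application of \prettyref{thm:general} all match the paper's argument. The only cosmetic difference is that you work with the real (cosine) form throughout while the paper takes real parts at the end after multiplying by $\calF(f)(\omega)=e^{ib(\omega)}|\calF(f)(\omega)|$.
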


\begin{theorem} \label{thm:smooth_spline}
Under the setup of \prettyref{thm:ramp}, suppose 
\begin{equation*} v_{f,3} = \int_{\mathbb{R}^d}\|\omega\|^3_1|\calF(f)(\omega)|d\omega < +\infty. 
\end{equation*}
There exists a linear combination of squared ReLU ridge functions of the form
\begin{equation} \label{thm:smooth_spline_equal}
 f_m(x) =  b_0 + a_0\cdot x + x^TA_0x + \frac{v}{2m}\sum_{k=1}^m b_k(a_k \cdot x-t_k)^2_{+}
\end{equation}
 with $  b_k \in [-1,1] $, $ \| a_k\|_1 = 1 $, $ 0 \leq t_k \leq 1 $, $  b_0 = f(0) $, $  a_0 = \nabla f(0) $, $ A_0 = \nabla\nabla^Tf(0) $, and $ v \leq 2v_{f,3} $ such that
\begin{equation*}
\sup_{x\in D}|f(x)-f_m(x)| \leq cv_{f,3}\sqrt{d}\ m^{-1/2-1/d},
\end{equation*}
for some universal constant $ c > 0 $. Furthermore, if the $  b_k $ are restricted to $ \{-1,1\} $, the upper bound is of order
\begin{equation*}
v_{f,3}\sqrt{d}\ m^{-1/2-1/(d+2)}.
\end{equation*}
\end{theorem}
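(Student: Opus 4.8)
The plan is to deduce \prettyref{thm:smooth_spline} from \prettyref{thm:general} with $ s = 3 $. The heavy lifting---turning an integral representation of the type \prettyref{eq:integral} into an $ m $-term squared-ReLU combination that attains the rate $ m^{-1/2-1/d} $ (and $ m^{-1/2-1/(d+2)} $ for coefficients in $ \{-1,1\} $) via stratified sampling with proportionate allocation---is already carried out there. So the task reduces to two mechanical steps: (i) write $ f $ minus its degree-two Taylor polynomial at the origin in the form \prettyref{eq:integral} with $ s = 3 $, $ \|a\|_1 = 1 $, and $ t \in [0,1] $; and (ii) bound the total mass $ v $ of that representation by $ 2v_{f,3} $.

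First I would pass to a real, univariate ridge form of the Fourier representation. Writing $ \calF(f)(\omega) = \eexp^{i\theta(\omega)}|\calF(f)(\omega)| $ and using that $ f $ is real, taking real parts gives $ f(x) = \int_{\reals^d}\cos(\omega\cdot x + \theta(\omega))|\calF(f)(\omega)|\,d\omega $. For $ \omega \neq 0 $ set $ a = \omega/\|\omega\|_1 $, so that $ \|a\|_1 = 1 $ and $ \omega\cdot x = \|\omega\|_1(a\cdot x) $; with $ z := a\cdot x $ we have $ |z| \leq \|a\|_1\|x\|_\infty \leq 1 $ on $ D $. The integrand becomes $ g_\omega(z) := \cos(\|\omega\|_1 z + \theta(\omega)) $, a smooth function of $ z \in [-1,1] $. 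I would then apply the second-order Taylor expansion with integral remainder, $ g_\omega(z) = g_\omega(0) + g_\omega'(0)z + \tfrac12 g_\omega''(0)z^2 + \tfrac12\int_0^z g_\omega'''(\tau)(z-\tau)^2\,d\tau $, and split the remainder by the sign of $ z $, using the two admissible unit directions $ \pm a $ and thresholds $ t \in [0,1] $: the remainder equals $ \tfrac12\int_0^1 g_\omega'''(t)(a\cdot x - t)_{+}^2\,dt - \tfrac12\int_0^1 g_\omega'''(-t)((-a)\cdot x - t)_{+}^2\,dt $, since $ (a\cdot x - t)_{+} = 0 $ when $ z < 0 \leq t $ and symmetrically for the other term. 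Because $ g_\omega'''(t) = \|\omega\|_1^3\sin(\|\omega\|_1 t + \theta) $, the kernel is a bounded multiple of $ \|\omega\|_1^3 $.

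Integrating this identity against $ |\calF(f)(\omega)|\,d\omega $, the three non-spline terms reproduce the second-order Taylor polynomial of $ f $ at the origin: differentiating the Fourier representation gives $ \int g_\omega(0)|\calF(f)|\,d\omega = f(0) = b_0 $, the linear term equals $ \nabla f(0)\cdot x = a_0\cdot x $, and the quadratic term equals $ x\trans A_0 x $ with $ A_0 = \nabla\nabla\trans f(0) $ (up to the usual Taylor normalization of the quadratic coefficient). What remains is $ f(x) - b_0 - a_0\cdot x - x\trans A_0 x = \tfrac{v}{2}\int \eta(t,a)(a\cdot x - t)_{+}^2\,dP(t,a) $, where $ P $ is the probability measure on $ [0,1]\times\{a:\|a\|_1 = 1\} $ obtained by normalizing the absolute weight $ \|\omega\|_1^3|\sin(\pm\|\omega\|_1 t + \theta)|\,|\calF(f)(\omega)| $ over $ \omega $, $ t $, and the sign, $ \eta \in \{-1,+1\} $ records the sign of the sine factor, and $ v $ is the total mass. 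Using $ |\sin| \leq 1 $, $ v \leq \int_{\reals^d}\|\omega\|_1^3|\calF(f)(\omega)|\big(\int_0^1 1\,dt + \int_0^1 1\,dt\big)\,d\omega = 2v_{f,3} $, the claimed bound. Applying \prettyref{thm:general} with $ s = 3 $ to this representation (whose mass is $ v/2 \leq v_{f,3} $) then yields a combination $ \tfrac{v}{2m}\sum_k b_k(a_k\cdot x - t_k)_{+}^2 $ with the stated constraints and error at most $ cv_{f,3}\sqrt{d}\,m^{-1/2-1/d} $, and restricting $ b_k \in \{-1,1\} $ gives the $ m^{-1/2-1/(d+2)} $ rate.

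The main obstacle lies entirely in step (i): arranging the spline remainder so that $ \|a\|_1 = 1 $, $ 0 \leq t \leq 1 $, and $ \eta \in \{\pm 1\} $ hold simultaneously, which is what forces the splitting by the sign of $ a\cdot x $ and the use of both $ \pm a $ as unit directions. One must also verify the mild integrability that makes the Taylor coefficients at $ 0 $ well defined, namely finiteness of the lower moments $ \int|\calF(f)| $, $ \int\|\omega\|_1|\calF(f)| $, and $ \int\|\omega\|_1^2|\calF(f)| $; this follows on the bounded-spectrum region from $ \calF(f) \in L^1 $ and on the tail from $ v_{f,3} < +\infty $. Once the representation is in hand, the rate is inherited verbatim from \prettyref{thm:general}, so no new approximation estimate is needed; the improvement over the ReLU case (the absence of the $ \sqrt{\log m} $ factor) is already encoded in the $ s = 3 $ branch of that theorem, reflecting the Lipschitz derivative of the squared ReLU.
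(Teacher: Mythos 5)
Your proposal is correct and follows essentially the same route as the paper: reduce to \prettyref{thm:general} with $s=3$ by writing $f$ minus its second-order Taylor polynomial at $0$ as an integral of signed squared-ReLU ridge functions over $\|a\|_1=1$, $t\in[0,1]$, with total mass $v\le 2v_{f,3}$. The only cosmetic difference is that you take real parts first and use the real Taylor integral-remainder of $\cos(\|\omega\|_1 z+\theta(\omega))$ split by the sign of $a\cdot x$, whereas the paper works with the complex identity for $e^{i\omega\cdot x}+(\omega\cdot x)^2/2-i\omega\cdot x-1$ directly; these are the same computation.
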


The key observation for proving \prettyref{thm:ramp} and \prettyref{thm:smooth_spline} is that $ f $ modulo linear or quadratic terms with finite $ v_{f, s} $ can be written in the integral form \prettyref{eq:integral}. Unlike in \cite[Theorem 3]{Breiman1993} where an interpolation argument is used, our technique of writing $ f $ as the mean of a random variable allows for more straightforward use of empirical process theory to bound the expected sup-error of the empirical average of $ m $ independent draws from its population mean. Our argument is also more flexible than \cite{Breiman1993} and can be readily adapted to the case of squared ReLU activation function. We should also point out that our $ L^{\infty}(D) $ error bounds immediately imply $ L^p(D) $ error bounds for all $ p \geq 1 $. In fact, using nearly exactly the same techniques, it can be shown that the results in \prettyref{thm:general}, \prettyref{thm:ramp}, and \prettyref{thm:smooth_spline} hold verbatim in $ L^2(D) $, sans the $ \sqrt{d+\log m} $ or $ \sqrt{d} $ factors, corresponding to the ReLu or squared ReLU cases, respectively.

\begin{remark}
In \cite{Makovoz1998}, it was shown that the standard order $ m^{-1/2} $ $ L^{\infty}(D) $ error bound alluded to earlier could be improved to be of order $ \sqrt{\log m}\ m^{-1/2-1/(2d)} $ under an alternate condition of finite $ v^{\star}_{f,1} \triangleq \sup_{u\in\mathbb{S}^{d-1}}\int_{0}^{\infty}r^d|\calF(f)(ru)|dr $, but with the requirement that $ \| a_k\|_1 $ be unbounded. In general, our assumptions are neither stronger nor weaker than this since the function $ f $ with Fourier transform $ \calF(f)(\omega) = e^{-\|\omega-\omega_0\|}/\|\omega-\omega_0\| $ for $ \omega_0 \neq 0 $ and $ d \geq 2 $ has infinite $ v^{\star}_{f,1} $ but finite $ v_{f,s} $ for $ s \geq 0 $, while the function $ f $ with Fourier transform $ \calF(f)(\omega) = 1/(1+\|\omega\|)^{d+2} $ has finite $ v^{\star}_{f,1} $ but infinite $ v_{f,s} $ for $ s \geq 2 $.
\end{remark}

\begin{proof}[Proof of \prettyref{thm:general}]
\noindent \emph{\bf{Case I: $ s = 2 $.}}  Let $ \mathcal{B}_1,\dots,\mathcal{B}_M $ be a partition of the space $ \Omega = \{ (\eta, t, a)^{\prime}: \eta \in\{-1,+1\},\  0 \leq t \leq 1,\  \| a\|_1 = 1 \} $ such that
\begin{equation} \label{eq:eq4}
\inf_{(\widetilde{\eta}, \widetilde{t}, \widetilde{a})^{\prime} \in \mathcal{B}_k, \  k = 1, \dots, M}\sup_{(\eta, t, a)^{\prime} \in \Omega} \| h(\widetilde{\eta}, \widetilde{t}, \widetilde{a}) - h(\eta, t, a) \|_{\infty} < \epsilon,
\end{equation}
where $ h(\eta, t, a)(x) = h(x) = \eta ( a \cdot x - t)^{s-1}_{+} $.
It is not hard to show that $ M \asymp \epsilon^{-d} $.
For $ k = 1,\dots, M $ define 
\begin{equation*}
dP_{k}(t, a) = dP(t, a)\indc\{ (\eta(t,  a), t,  a)^{\prime} \in \mathcal{B}_k \}/L_k,
\end{equation*}
where $ L_k $ is chosen to make $ P_k $ a probability measure. A very important property we will use is that $ \Var_{P_k}[h] \leq \epsilon $, which follows from \prettyref{eq:eq4}.
Let $ m $ be a positive integer and define a sequence of $ M $ independent random variables $ \{m_{k}\}_{1 \leq k\leq M} $ as follows: let $ m_{k} $ equal $ \lfloor mL_{k} \rfloor $ and $ \lceil mL_k \rceil $ with probabilities chosen to make its mean equal to $ mL_{k} $. Given, $ \underline{m} = \{m_{k}\}_{1 \leq k\leq M} $, take a random sample $  \underline{a} = \{(t_{j,k}, a_{j,k})^{\prime}\}_{1 \leq j
\leq n_k,\  1 \leq k \leq M} $ of size $ n_{k} = m_k + \indc\{m_k=0\} $ from $ P_k $. Thus, we split the population $ \Omega $ into $ M $ ``strata'' $ \mathcal{B}_1,\dots,\mathcal{B}_M $ and allocate the number of within-stratum samples to be proportional to the ``size'' of the stratum $ m_1,\dots,m_M $ (i.e., proportionate allocation). The within-stratum variability of $ h $ (i.e., $ \Var_{P_k}[h] $) is now smaller than the population level variability (i.e., $ \Var_P[h] $) by a factor of $ \epsilon $ as evidenced by \prettyref{eq:eq4}.

Note that the $ n_{k} $ sum to be at most $ m + M $ because
\begin{align}
\sum_{k=1}^M n_k
& = \sum_{k=1}^M m_k\indc\{m_k > 0\} + \sum_{k=1}^M\indc\{m_k=0\} \nonumber \\
& \leq \sum_{k=1}^M(mL_k+1)\indc\{m_k > 0\} + \sum_{k=1}^M\indc\{m_k=0\} \nonumber \\
& = m\sum_{k=1}^ML_k\indc\{m_k > 0\} + M \nonumber \\
& \leq m + M, \label{eq:eq0}
\end{align}
where the last inequality follows from $ \sum_{k=1}^M L_k \leq 1 $.
For $ j = 1,\dots,m_k $, let $ h_{j,k} = h(\eta(t_{j,k},  a_{j,k}),t_{j,k}, a_{j,k}) $ and $ f_k = \frac{vm_k}{mn_k}\sum_{j=1}^{n_k}h_{j,k} $. Also, let  $ \overline{f}_m= \sum_{k=1}^Mf_k $. A simple calculation shows that the mean of $ \overline{f}_m$ is $ f $. Write $ \sum_{k=1}^M(f_k(x)-\mathbb{E}f_k(x)) = \frac{v}{m}\left(\sum_{k=1}^M(m_k-L_km)\mathbb{E}_{P_k}h(x)\right) + \frac{v}{m}\left(\sum_{k=1}^M\sum_{j=1}^{n_k}\frac{m_k}{n_k}(h_{j,k}(x)-\mathbb{E}_{P_k}h(x))\right) $. By the triangle inequality, we upper bound 
\begin{equation*}
\mathbb{E}\sup_{x\in D}|\overline{f}_m(x)-f(x)| =  \mathbb{E}\sup_{x\in D}|\sum_{k=1}^M(f_k(x)-\mathbb{E}f_k(x))|
\end{equation*}
by
\begin{align}
& \frac{v}{m}\mathbb{E}_{\underline{m}}\sup_{x\in D}|\sum_{k=1}^M(m_k-L_km)\mathbb{E}_{P_k}h(x)| + \nonumber \\ & \frac{v}{m}\mathbb{E}_{\underline{m}}\mathbb{E}_{ \underline{a}\mid\underline{m}}\sup_{x\in D}|\sum_{k=1}^M\sum_{j=1}^{n_k}\frac{m_k}{n_k}(h_{j,k}(x)-\mathbb{E}_{P_k}h(x))|. \label{eq:eq3}
\end{align}
Now
\begin{align}
\mathbb{E}_{ \underline{a}\mid\underline{m}}\sup_{x\in D}|\sum_{k=1}^M\sum_{j=1}^{n_k}\frac{m_k}{n_k}(h_{j,k}(x)-\mathbb{E}_{P_k}h(x))| \leq & \nonumber \\
2\mathbb{E}_{ \underline{a}\mid\underline{m}}\sup_{x\in D}|\sum_{k=1}^{M}\sum_{j=1}^{n_k}\sigma_{j,k}\frac{m_k}{n_k}[h_{j,k}(x)-\mu_{j,k}(x)]|, \label{eq:eq1}
\end{align}
where $ \{ \sigma_{j,k} \} $ is a sequence of independent identically distributed Rademacher variables and $ \{ x\mapsto\mu_{j,k}(x) \} $ is any sequence of functions defined on $ D $ [see for example Lemma 2.3.6 in \cite{Wellner1996}]. For notational brevity, we define $ \widetilde{h}_{j,k}(x) = \frac{m_k}{n_k}[h_{j,k}(x)-\mu_{j,k}(x)] $.
By Dudley's entropy integral method [see Corollary 13.2 in \cite{Boucheron2013}], the quantity in \prettyref{eq:eq1} can be bounded by
\begin{equation} \label{eq:Dudley}
24\int_{0}^{\delta/2}\sqrt{N(u, D)}du,
\end{equation}
where $ N(u, D) $ is the $ u $-metric entropy of $ D $ with respect to the norm $ \kappa(x,x^{\prime}) $ (i.e., the logarithm of the smallest $ u $-net that covers $ D $ with respect to $ \kappa $) defined by
\begin{align}
\kappa^2(x,x^{\prime})
& \triangleq \sum_{k=1}^{M}\sum_{j=1}^{n_k}(\widetilde{h}_{j,k}(x) - \widetilde{h}_{j,k}(x^{\prime}))^2 \nonumber \\
& \leq (m+M)\|x-x^{\prime}\|^2_{\infty}, \label{eq:eq2}
\end{align}
and $ \delta^2 = \sup_{x\in D}\sum_{k=1}^{M}\sum_{j=1}^{n_k}|\widetilde{h}_{j,k}(x)|^2 $. If we set $ \mu_{j,k} $ to equal $ \frac{m_k}{n_k}h(\eta(t_k,  a_k),t_k, a_k) $, where $ (\eta_k,t_k, a_k)^{\prime} $ is any fixed point in $ \mathcal{B}_k $, it follows from \prettyref{eq:eq4} and \prettyref{eq:eq0} that $ \delta \leq \sqrt{m+M}\epsilon $ and from \prettyref{eq:eq2} that $ N(u, D) \leq d\log(3\sqrt{m+M}/u) $. By evaluating the integral in \prettyref{eq:Dudley}, we can bound the second term in \prettyref{eq:eq3} by
\begin{equation} \label{eq:second}
24v\sqrt{d}\ m^{-1/2}\epsilon\sqrt{-\log \epsilon + 1}\sqrt{1+M/m}.
\end{equation}
For the first expectation in \prettyref{eq:eq3}, we follow a similar approach. As before,
\begin{align}
& \mathbb{E}_{\underline{m}}\sup_{x\in D}|\sum_{k=1}^M(m_k-L_km)\mathbb{E}_{P_k}h(x)| \nonumber \\ &
\leq 2\mathbb{E}_{\underline{m}}\sup_{x\in D}|\sum_{k=1}^M\sigma_k(m_k-L_km)\mathbb{E}_{P_k}h(x)|, \label{eq:eq6}
\end{align}
where $ \{ \sigma_{k} \} $ is a sequence of independent identically distributed Rademacher variables. For notational brevity, we write $ \widetilde{h}_k(x) = (m_k-L_km)\mathbb{E}_{P_k}h(x) $. We can also bound \prettyref{eq:eq6} by \prettyref{eq:Dudley}, except this time $ N(u, D) $ is the $ u $-metric entropy of $ D $ with respect to the norm $ \rho(x, x^{\prime}) $ defined by
\begin{align}
\rho^2(x, x^{\prime})
& \triangleq \sum_{k=1}^{M}(\widetilde{h}_{k}(x) - \widetilde{h}_{k}(x^{\prime}))^2 \nonumber \\
& \leq M\|x-x^{\prime}\|^2_{\infty}, \label{eq:eq5}
\end{align}
where the last line follows from $ |m_k-L_km| \leq 1 $ and $ |\mathbb{E}_{P_k}h(x)-\mathbb{E}_{P_k}h(x^{\prime})| \leq \|x-x^{\prime}\|_{\infty} $. The quantity $ \delta $ is also less than $ \sqrt{M} $, since $ \sup_{x\in D}|\widetilde{h}_k(x)| \leq 1 $ and moreover $ N(u, D) \leq d\log(3\sqrt{M}/u) $. Evaluating the integral in \prettyref{eq:Dudley} with these specifications yields a bound on the first term in \prettyref{eq:eq3} of
\begin{equation} \label{eq:first}
\frac{48v\sqrt{d}\sqrt{M}}{m}.
\end{equation}
Adding \prettyref{eq:first} and \prettyref{eq:second} together yields a bound on $ \mathbb{E}\sup_{x\in D}|\overline{f}_m(x)-f(x)| $ of
\begin{equation} \label{eq:eq7}
48v\sqrt{d}m^{-1/2}(\sqrt{M/m}+\epsilon\sqrt{1+M/m}\sqrt{-\log \epsilon + 1}).
\end{equation}
Choose
\begin{equation} \label{eq:eq8}
M = m\frac{\epsilon^2(-\log \epsilon+1)}{1-\epsilon^2(-\log \epsilon+1)}.
\end{equation}
Consequently, $ \mathbb{E}\sup_{x\in D}|\overline{f}_m(x)-f(x)| $ is at most
\begin{equation}
96v\sqrt{d}m^{-1/2}\frac{\epsilon\sqrt{-\log\epsilon+1}}{\sqrt{1-\epsilon^2(-\log \epsilon+1)}}.
\end{equation}

We stated earlier that $ M \asymp \epsilon^{-d} $. Thus \prettyref{eq:eq8} determines $ \epsilon $ to be at most of order $ m^{-1/(d+2)} $. Since the inequality \prettyref{eq:eq8} holds on average, there is a realization of $ \overline{f}_m$ for which $ \sup_{x\in D}|\overline{f}_m(x)-f(x)| $ has the same bound. Note that $ \overline{f}_m$ has the desired equally weighted form. 

For the second conclusion, we set $ m_k = mL_k $ and $ n_k = \lceil m_k \rceil $. In this case, the first term in \prettyref{eq:eq3} is zero and hence $ \mathbb{E}\sup_{x\in D}|\overline{f}_m(x)-f(x)| $ is not greater than \prettyref{eq:second}. The conclusion follows with $ M = m $ and $ \epsilon $ of order $ m^{-1/d} $. \\

\noindent \emph{\bf{Case II: $ s = 3 $.}} The metric $ \kappa(x,x^{\prime}) $ is in fact bounded by a constant multiple of $ \sqrt{m+M}\epsilon\|x-x^{\prime}\|_{\infty} $. To see this, we note that the function $ \widetilde{h}_{j,k}(x) $ has the form
\begin{equation*}
\pm\frac{m_k}{n_k}[( a\cdot x-t)^2_{+} - ( a_k \cdot x-t_k)^2_{+}],
\end{equation*}
with $ \| a- a_k\|_1+|t-t_k| < \epsilon $. Thus, the gradient of $ \widetilde{h}_{j,k}(x) $ with respect to $ x $ has the form
\begin{equation*}
\nabla \widetilde{h}_{j,k}(x) = \pm\frac{2m_k}{n_k}[( a( a\cdot x-t)_{+} -  a_k( a_k \cdot x-t_k)_{+}].
\end{equation*}
Adding and subtracting $ \frac{2m_k}{n_k} a( a_k \cdot x-t_k)_{+} $ to the above expression yields the bound of order $ \epsilon $ for $ \sup_{x\in D}\|\nabla \widetilde{h}_{j,k}(x)\|_1 $. Taylor's theorem yields the desired bound on $ \kappa(x,x^{\prime}) $. Again using Dudley's entropy integral, we can bound $ \mathbb{E}\sup_{x\in D}|\overline{f}_m(x)-f(x)| $ by a universal constant multiple of either $ v\sqrt{d}m^{-1/2}(\sqrt{M/m}+\epsilon\sqrt{1+M/m}) $ or $ v\sqrt{d}m^{-1/2}\epsilon\sqrt{1+M/m} $ corresponding to the equally weighted or non-equally weighted cases, respectively. The corresponding results follow with $ M = m\epsilon^2/(1-\epsilon^2) $ and $ \epsilon $ of order $ m^{-1/(d+2)} $ or $ M = m $ and $ \epsilon $ of order $ m^{-1/d} $. Note that here the additional smoothness afforded by the stronger assumption $ v_{f,3} < +\infty $ allows one to remove the $ \sqrt{-\log\epsilon+1} $ factor that appeared in the final bound in the proof of \prettyref{thm:ramp}. This rate is the same as what was achieved in \prettyref{thm:ramp}, without a $ \sqrt{(\log m)/d+1} $ factor.
\end{proof}

\begin{proof}[Proof of \prettyref{thm:ramp}]
If $ |z| \leq c $, we note the identity
\begin{equation}
-\int_{0}^{c}[(z-u)_{+}e^{iu}+(-z-u)_{+}e^{-iu}]du = e^{iz}-iz-1.
\end{equation}
If $ c = \|\omega\|_1 $, $ z = \omega\cdot x $, $  a =  a(\omega) = \omega/\|\omega\|_1 $, and $ u = \|\omega\|_1t $, $ 0 \leq t \leq 1 $, we find that
\begin{align*}
-\|\omega\|^2_1\int_{0}^{1}[( a\cdot x-t)_{+}e^{i\|\omega\|_1t}+(- a\cdot x-t)_{+}e^{-i\|\omega\|_1t}]dt = & \\ e^{i\omega\cdot x}-i\omega\cdot x - 1.
\end{align*}
Multiplying the above by $ \calF(f)(\omega) = e^{i b(\omega)}|\calF(f)(\omega)| $, integrating over $ \mathbb{R}^d $, and applying Fubini's theorem yields
\begin{equation*}
f(x) -x\cdot \nabla f(0) - f(0) = \int_{\mathbb{R}^d}\int_{0}^{1}g(t,\omega)dtd\omega,
\end{equation*} where 
\begin{align*} g(t,\omega) & = -[( a\cdot x-t)_{+}\cos(\|\omega\|_1t+b(\omega)) + \\ &
\qquad (- a\cdot x-t)_{+}\cos(\|\omega\|_1t-b(\omega))]\|\omega\|^2_1|\calF(f)(\omega)|.
\end{align*}
Consider the probability measure on $ \{-1,1\}\times[0,1]\times\mathbb{R}^d $ defined by
\begin{equation}
dP(z,t,\omega) = \frac{1}{v}|\cos(z\|\omega\|_1t+b(\omega))|\|\omega\|^2_1|\calF(f)(\omega)|dt d\omega,
\end{equation}
where
\begin{align*}
& v = \int_{\mathbb{R}^d}\int_{0}^1[|\cos(\|\omega\|_1t+b(\omega))|+ \\ & \qquad |\cos(\|\omega\|_1t-b(\omega))|]\|\omega\|^2_1|\calF(f)(\omega)|dtd\omega \leq 2v_{f,2}.
\end{align*}
Define a function $ h(z,t, a)(x) $ that equals
\begin{equation*}
(z a\cdot x-t)_{+} \ \eta(z, t,\omega),
\end{equation*}
where $ \eta(z, t,\omega) = -\text{sgn}\cos(\|\omega\|_1zt+b(\omega)) $. Note that $ h(z, t, a)(x) $ has the form $ \pm(\pm a\cdot x-t)_{+} $. Thus, we see that
\begin{align} \label{eq:ramp_rep}
& f(x) -x\cdot \nabla f(0) - f(0) = \nonumber \\ & v\int_{\{-1,1\}\times[0,1]\times\mathbb{R}^d}h(z, t, a)(x)dP(z, t, \omega).
\end{align}
The result follows from an application of \prettyref{thm:general}.
\end{proof}

\begin{proof}[Proof of \prettyref{thm:smooth_spline}]
For the result in \prettyref{thm:smooth_spline}, we will use exactly the same techniques. The function $ f(x) - x^T\nabla\nabla^Tf(0)x/2 - x\cdot\nabla f(0) - f(0) $ can be written as the real part of
\begin{equation} \label{eq:rep}
\int_{\mathbb{R}^d}(e^{i\omega\cdot x}+(\omega\cdot x)^2/2-i\omega\cdot x-1)\calF(f)(\omega)d\omega.
\end{equation}
As before, the integrand in \prettyref{eq:rep} admits an integral representation given by
\begin{equation*}
(i/2)\|\omega\|^3_1\int_{0}^{1}[(- a\cdot x-t)^2_{+}e^{-i\|\omega\|_1t}-( a\cdot x-t)^2_{+}e^{i\|\omega\|_1t}]dt,
\end{equation*}
which can be used to show that $ f(x) - x^T\nabla\nabla^Tf(0)x/2 - x\cdot\nabla f(0) - f(0) $ equals
\begin{equation} \label{eq:smooth_spline_rep}
\frac{v}{2}\int_{\{-1,1\}\times[0,1]\times\mathbb{R}^d}h(z, t, a)(x)dP(z, t, \omega),
\end{equation}
where 
\begin{equation*} 
h(z,t, a) = \text{sgn}\sin(z\|\omega\|_1t+b(\omega))\ (z a\cdot x-t)^2_{+}
\end{equation*}
and 
\begin{equation*}
dP(z,t,\omega) = \frac{1}{v}|\sin(z\|\omega\|_1t+b(\omega))|\|\omega\|^3_1|\calF(f)(\omega)|dt d\omega,
\end{equation*}
\begin{align*}
& v = \int_{\mathbb{R}^d}\int_{0}^1[|\sin(\|\omega\|_1t+b(\omega))|+ \\ & \qquad |\sin(\|\omega\|_1t-b(\omega))|]\|\omega\|^3_1|\calF(f)(\omega)|dtd\omega \leq 2v_{f,3}.
\end{align*}
The result follows from an application of \prettyref{thm:general}.
\end{proof}

\begin{remark} \label{rmk:sample}
By slightly modifying the definition of $ h $ from the proofs of \prettyref{thm:ramp} and \prettyref{thm:smooth_spline} (in particular, multiplying it by a sinusoidal function of $ \omega $ and $ t $), it suffices to sample instead from the density $ dP(t, \omega) = \frac{\|\omega\|^s_1|\calF(f)(\omega)|}{v_{f, s}}dt d\omega $ on $ [0, 1] \times \mathbb{R}^d $.
\end{remark}

\begin{remark}
For unit bounded $ x $, the expression $ e^{i \omega \cdot x} - i\omega \cdot x - 1 $ is bounded in magnitude by $ \|\omega\|^2_1 $, so one only needs Fourier representation of $ f(x) - x \cdot \nabla f(0) - f(0) $ when using the integrability with the $ \|\omega\|^2_1 $ factor. Similarly, $ e^{i \omega \cdot x} + (\omega \cdot x)^2/2 - i\omega \cdot x - 1 $  is bounded in magnitude by $ \|\omega\|^3 $, so one only needs Fourier representation of  $ f(x) - x^T \nabla \nabla^T f(0) x - x \cdot \nabla f(0) -1 $ when using the integrability with the $ \|\omega\|^3_1 $ factor.
\end{remark}

\begin{remark}
Note that in \prettyref{thm:ramp} and \prettyref{thm:smooth_spline}, we work with integrals with respect to the absolutely continuous measure $ d \calF(f)(\omega) $. In general, a (complex) Fourier measure $ d\calF(f)(\omega) $ does not need to be absolutely continuous.  For instance, it can be discrete on a lattice of values of $ \omega $, associated with a multivariate Fourier series representation for bounded domains  $ x $ (and periodic extensions thereof). Indeed, for bounded domains, one might have access to both Fourier series and Fourier transforms of extensions of $ f $ to $ \mathbb{R}^d $.  The best extension is one that gives the smallest Fourier norm $ \int_{\mathbb{R}^d} \|\omega\|^s_1 |d\calF(f)(\omega)| $. For further discussion along these lines, see \cite{Barron1993}.
\end{remark}

Next, we investigate the optimality of the rates from \prettyref{sec:uniform}.
\subsection{Lower bounds}

Let $ \mathcal{H}_s = \{ x\mapsto \eta( a\cdot x - t)^{s-1}_{+}: \| a\|_1 \leq 1,\  0 \leq t \leq 1, \  \eta\in\{-1,+1\} \} $ and for $ p \in [2, +\infty] $ let $ \mathcal{F}^s_p $ denote the closure of the convex hull of $ \mathcal{H}_s $ with respect to the $ \|\cdot\|_p $ norm on $ L^p(D,P) $ for $ p $ finite, where $ P $ is the uniform probability measure on $ D $, and $ \|\cdot\|_{\infty} $ (the supremum norm over $ D $) for $ p = +\infty $. We let $ \mathcal{C}^s_{m} $ denote the collection of all convex combinations of $ m $ terms from $ \mathcal{H}_s $. By \prettyref{thm:ramp} and \prettyref{thm:smooth_spline}, after possibly subtracting a linear or quadratic term, $ f/(2v_{f,2}) $ and $ f/v_{f,3} $ belongs to $ \mathcal{F}^2_p $ and $ \mathcal{F}^3_p $, respectively.
For $ p \in [2,+\infty] $ and $ \epsilon > 0 $, we define the $ \epsilon $-covering number $ N_p(\epsilon) $ by
\begin{equation*}
\min\{n: \exists\  \mathcal{F}\subset\mathcal{F}^s_p, \  |\mathcal{F}| = n,\  \text{s.t.} \inf_{f^{\prime}\in\mathcal{F}}\sup_{f\in\mathcal{F}^s_p}\|f-f^{\prime}\|_p \leq \epsilon \}.
\end{equation*}
and the $ \epsilon $-packing number $ M_p(\epsilon) $ by
\begin{equation*}
\max\{n: \exists\  \mathcal{F}\subset\mathcal{F}^s_p, \  |\mathcal{F}| = n,\  \text{s.t.} \inf_{f, f^{\prime}\in \mathcal{F}}\|f-f^{\prime}\|_p > \epsilon \}.
\end{equation*}

\prettyref{thm:general} implies that $ \inf_{f_m\in\mathcal{C}^s_m}\sup_{f\in\mathcal{F}^s_{\infty}}\|f-f_m\|_{\infty} $ achieves the bounds as stated therein.

\begin{theorem} \label{thm:lower}
For $ p \in [2,+\infty] $ and $ s \in \{ 2, 3\} $,
\begin{align*}
\inf_{f_m\in\mathcal{C}^s_m}\sup_{f\in\mathcal{F}^s_p}\|f-f_m\|_p \geq (Amd^{2s+1}\log(md))^{-1/2-s/d},
\end{align*}
for some universal positive constant $ A $.
\end{theorem}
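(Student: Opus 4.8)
Since $D$ carries the uniform probability measure, $\|g\|_2\le\|g\|_p$ for every $p\in[2,\infty]$, so for any target $f$ and any $f_m$ one has $\|f-f_m\|_p\ge\|f-f_m\|_2$; hence it suffices to prove the bound for $p=2$, and I would work throughout in the Hilbert space $L^2(D,P)$. The engine of the argument is the standard comparison between the packing numbers of the target class and the covering numbers of the approximating class: if the worst-case error $r:=\sup_{f\in\mathcal{F}^s_2}\inf_{f_m\in\mathcal{C}^s_m}\|f-f_m\|_2$ were small, then a minimal $r$-net of $\mathcal{C}^s_m$ would be a $2r$-net of $\mathcal{F}^s_2$, whence $M_2(c_1 r)\le N_2(c_2 r,\mathcal{C}^s_m)$ for absolute constants $c_1,c_2$. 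The plan is to lower bound the left side and upper bound the right side and then read off $r$.

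\textbf{Covering $\mathcal{C}^s_m$.} An element of $\mathcal{C}^s_m$ is determined by $m$ convex weights together with $m$ atoms $h(\eta_k,t_k,a_k)$, each depending on the $(d+2)$ parameters $(\eta_k,t_k,a_k)$. For $s\in\{2,3\}$ the map from parameters to $L^2(D)$ is Lipschitz on the bounded domain (this is exactly the estimate underlying the fact $M\asymp\epsilon^{-d}$ used in the proof of \prettyref{thm:general}), so quantizing the weights and each atom on a net of resolution proportional to $r$ gives $\log N_2(c_2 r,\mathcal{C}^s_m)\lesssim md\log(1/r)$, up to a $\mathrm{poly}(d)$ factor inside the logarithm.

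\textbf{Packing $\mathcal{F}^s_2$.} Fix a scale $\delta$ and a single smooth profile $\psi$ on $\mathbb{R}^d$ with nonnegative Fourier transform, $\psi(0)=1$, normalized so that $\int|\widehat{\psi}|=1$ (this normalization is what avoids a spurious $c^d$ factor). I would place $K\asymp\delta^{-d}$ translates $\psi_i(x)=\psi((x-x_i)/\delta)$ on a grid with supports disjoint in $D$; the $\psi_i$ are then orthogonal, $\|\psi_i\|_2\asymp\delta^{d/2}$, and a scaling computation gives $v_{\psi_i,s}\lesssim d^s\delta^{-s}$. For $\sigma\in\{-1,+1\}^K$ set $f_\sigma=\sum_i\sigma_i\psi_i$. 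The crucial observation is that the Fourier norm of $f_\sigma$ does \emph{not} grow like $K$: the translates share a common spectral magnitude and differ only by phases, so $\mathbb{E}_\sigma\, v_{f_\sigma,s}\le\int\|\omega\|^s_1|\widehat{\psi_0}(\omega)|\,\mathbb{E}_\sigma|\sum_i\sigma_i e^{-i\omega\cdot x_i}|\,d\omega\lesssim\sqrt{K}\,d^s\delta^{-s}=:\bar v$, and a majority of sign vectors satisfy $v_{f_\sigma,s}\lesssim\bar v$. Restricting to that majority and then extracting, via the Varshamov--Gilbert bound, $2^{cK}$ sign vectors pairwise Hamming-separated by at least $K/8$, the normalized functions $\widehat{f}_\sigma=f_\sigma/(c'\bar v)$ all lie in $\mathcal{F}^s_2$ (by the representations in \prettyref{thm:ramp} and \prettyref{thm:smooth_spline}, $g/(c'v_{g,s})\in\mathcal{F}^s_2$ modulo a polynomial, which here vanishes because the bumps are supported away from the origin) and are pairwise $L^2$-separated by $\|\widehat{f}_\sigma-\widehat{f}_{\sigma'}\|_2\gtrsim\sqrt{K}\,\|\psi_0\|_2/\bar v\asymp d^{-s}\delta^{s+d/2}$. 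This yields $\log M_2(c\,d^{-s}\delta^{s+d/2})\gtrsim K\asymp\delta^{-d}$.

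\textbf{Combining, and the main obstacle.} Writing the packing scale as $\epsilon\asymp d^{-s}\delta^{s+d/2}$ gives $\log M_2(\epsilon)\gtrsim(d^s\epsilon)^{-2d/(d+2s)}$, and feeding this together with the covering estimate into $\log M_2(c_1r)\le\log N_2(c_2r,\mathcal{C}^s_m)\lesssim md\log(1/r)$ gives $(d^s r)^{-2d/(d+2s)}\lesssim md\log(1/r)$, i.e.\ $r\gtrsim d^{-s}(md\log(1/r))^{-1/2-s/d}$. Since $\log(1/r)\asymp\log(md)$, rearranging and absorbing every polynomial-in-$d$ constant (from the bump normalization, the $\|\cdot\|_1$ versus $\|\cdot\|_2$ conversions, the Lipschitz constants in the covering step, and the packing/covering constants) into the single factor $d^{2s+1}$ produces the stated $r\ge(Amd^{2s+1}\log(md))^{-1/2-s/d}$. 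I expect the delicate points to be, first, producing exponentially many sign vectors that are \emph{simultaneously} well separated and have Fourier norm at the favorable $\sqrt{K}$ scale --- the random-phase cancellation is precisely what encodes the unavoidable $m^{-1/2}$ barrier, while the bump width $\delta$ supplies the $m^{-s/d}$ smoothness gain --- and second, the careful accounting of the $d$-dependence so that all constants collapse cleanly into $d^{2s+1}$.
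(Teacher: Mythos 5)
Your overall architecture is the same as the paper's: compare a packing of $\mathcal{F}^s_p$ against a covering of $\mathcal{C}^s_m$ obtained by discretizing the weights and the atoms, and read off the rate from $\log M \gtrsim (\text{const}\cdot\epsilon)^{-2d/(d+2s)}$ versus $\log N \lesssim md\log(1/\epsilon)$. Your covering step is essentially identical to the paper's (which runs the comparison as a proof by contradiction, using a net of the simplex and a net of $\mathcal{H}_s$ together with a stars-and-bars count). Where you genuinely diverge is the packing construction: you use $K\asymp\delta^{-d}$ disjointly supported product bumps with random signs and a Varshamov--Gilbert extraction, whereas the paper packs with the exactly orthogonal sinusoids $x\mapsto \sin(\pi\theta\cdot x)/(4\pi\|\theta\|_1^2)$, $\theta\in[R]^d$, each of which is shown to lie in $\mathcal{F}^2_p$ via an explicit one-dimensional integral identity, and then invokes a Kurkova--Sanguineti-type combinatorial lemma (\prettyref{lmm:lower}) to produce $2^{cR^d}$ well-separated $\{0,1\}$-averages. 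The sinusoid route buys two things for free that your route has to fight for: (i) each atom individually has the right spectral norm ($v_{\cdot,s}\asymp\|\theta\|_1^s$), so no random-phase cancellation or "majority of $\sigma$" extraction is needed to control $v_{f_\sigma,s}$; and (ii) the $L^2(D,P)$ norms are computed exactly, $\int_D\sin^2(\pi\theta\cdot x)\,dP(x)=1/2$, with no dimension-dependent degradation.

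Point (ii) is where your construction has a genuine gap relative to the \emph{stated} theorem, whose whole point (per the paper's stated conventions) is the explicit polynomial factor $d^{2s+1}$. For a product bump $\psi(x)=\prod_j\psi_1(x_j)$ normalized by $\int|\widehat{\psi}|=1$ (equivalently $\|\psi\|_\infty\le 1$), one has $\|\psi_i\|_{L^2(D,P)}^2=2^{-d}\delta^d\|\psi_1\|_{L^2(\mathbb{R})}^{2d}$, and since $\psi_1$ is a nonconstant bump with $\|\psi_1\|_\infty\le 1$ supported in an interval of length $2$, necessarily $\|\psi_1\|_{L^2}^2/2=\rho<1$ strictly. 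Your separation scale therefore carries a factor $\rho^{d/2}$, i.e. $\epsilon\asymp c^d d^{-s}\delta^{s+d/2}$ with $c<1$; tracing this through $\log M_2(\epsilon)\gtrsim (c^{-d}d^s\epsilon)^{-2d/(d+2s)}$ and the final rearrangement leaves a residual multiplicative factor $c^{\Theta(d)}$ on the lower bound for $r$. No universal constant $A$ satisfies $(A d^{2s+1})^{-1/2-s/d}\le c^{\Theta(d)}(A_0d^{2s+1})^{-1/2-s/d}$ for all $d$, so the exponentially-in-$d$ lossy constant cannot be absorbed into the claimed $d^{2s+1}$: you recover the correct exponent $-1/2-s/d$ in $m$, but not the stated dimension dependence. (Secondary, fixable points: you need the packing set to lie in $\mathcal{F}^s_p$ for every $p\in[2,\infty]$, not just $p=2$, so the reduction via $\|\cdot\|_p\ge\|\cdot\|_2$ requires your normalized bump sums to be sup-norm limits of convex combinations, which they are; for $s=3$ the bump must be smooth enough that $\int\|\omega\|_1^3|\widehat{\psi}|<\infty$, ruling out low-order splines; and the Varshamov--Gilbert extraction must be performed inside the Markov-majority event, which a standard volume argument handles.) If you want the theorem exactly as stated, replace the localized bumps by the paper's global sinusoids, which eliminate the $c^d$ loss at the source.
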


Ignoring the dependence on $ d $ and logarithmic factors in $ m $, this result coupled with \prettyref{thm:general} implies that $ \inf_{f_m\in\mathcal{C}^2_m}\sup_{f\in\mathcal{F}^2_p}\|f-f_m\|_p $ is between $ m^{-1/2-2/d} $ and $ m^{-1/2-1/d} $; for large $ d $, the rates are essentially the same. Compare this with \cite[Theorem 4]{Makovoz1996} or \cite[Theorem 3]{Barron1992}, where a lower bound of $ c(\delta, d)\ m^{-1/2-1/d-\delta} $, $ \delta > 0 $ arbitrary, was obtained for approximants of the form \prettyref{eq:linear} with Lipschitz $ \phi $, but with inner parameter vectors of unbounded $ \ell^1 $ norm.

We only give the proof of \prettyref{thm:lower} for $ s = 2 $, since the other case $ s = 3 $ is handled similarly. First, we provide a few ancillary results that will be used later on. The next result is contained in \cite[Lemma 4.2]{Kurkova2007} and is useful for giving a lower bound on $ M_p(\epsilon) $.

\begin{lemma} \label{lmm:lower}
Let $ H $ be a Hilbert space equipped with a norm $ \|\cdot\| $ and containing a finite set $ \mathcal{H} $ with the following properties.
\begin{enumerate}[(i)]
\item $ |\mathcal{H}| \geq 3 $,
\item
$ \sum_{h,h^{\prime}\in \mathcal{H},\  h\neq h^{\prime}}|\langle h, h^{\prime} \rangle | \leq \delta^2 $
\item
$  \delta^2 \leq \min_{h\in \mathcal{H}}\|h\|^2 $
\end{enumerate}
Then there exists a collection $ \Omega \subset \{0,1 \}^{|\mathcal{H}|} $ with cardinality at least $ 2^{(1-H(1/4))|\mathcal{H}|-1} $, where $ H(1/4) $ is the entropy of a Bernoulli random variable with success probability $ 1/4 $, such that each pair of elements in the set $ \mathcal{F} = \left\{ \frac{1}{|\mathcal{H}|}\sum_{h\in \mathcal{H}}\omega_h h: (\omega_h : h\in \mathcal{H}) \in \Omega \right\} $ is separated by at least $  \frac{1}{2}\sqrt{\frac{\min_{h\in \mathcal{H}}\|h\|^2-\delta^2}{|\mathcal{H}|}} $ in $ \|\cdot\| $.
\end{lemma}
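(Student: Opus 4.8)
The plan is to run a Varshamov--Gilbert (sphere-packing) argument. Writing $ N = |\calH| $ and associating to each binary vector $ \omega \in \{0,1\}^N $ the function $ f_\omega = \frac{1}{N}\sum_{h\in\calH}\omega_h h $, I would first convert separation of the $ f_\omega $ in the Hilbert norm into Hamming separation of the underlying vectors $ \omega $, and then select $ \Omega $ by a standard packing bound on the Boolean cube.

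First I would expand, for $ \omega, \omega' \in \{0,1\}^N $,
\[
\|f_\omega - f_{\omega'}\|^2 = \frac{1}{N^2}\sum_{h,h'\in\calH}(\omega_h-\omega'_h)(\omega_{h'}-\omega'_{h'})\Iprod{h}{h'}.
\]
Splitting off the diagonal and using $ (\omega_h-\omega'_h)^2 = \Indc{\omega_h\neq\omega'_h} $, the diagonal part equals $ \sum_{h}(\omega_h-\omega'_h)^2\|h\|^2 \geq (\min_{h\in\calH}\|h\|^2)\,\rho(\omega,\omega') $, where $ \rho $ denotes Hamming distance. For the off-diagonal part, each product $ (\omega_h-\omega'_h)(\omega_{h'}-\omega'_{h'}) $ lies in $ \{-1,0,1\} $, so hypothesis (ii) bounds its modulus by $ \sum_{h\neq h'}|\Iprod{h}{h'}| \leq \delta^2 $. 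Hence
\[
\|f_\omega - f_{\omega'}\|^2 \geq \frac{1}{N^2}\Big((\min_{h\in\calH}\|h\|^2)\,\rho(\omega,\omega') - \delta^2\Big).
\]

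Next I would restrict to vectors that are pairwise Hamming-separated. If $ \rho(\omega,\omega') \geq N/4 $ for all distinct pairs, then hypothesis (iii) (i.e.\ $ \delta^2 \leq \min_h\|h\|^2 $), together with the integrality of $ \rho $, gives $ (\min_h\|h\|^2)\rho(\omega,\omega') - \delta^2 \geq \frac{N}{4}(\min_h\|h\|^2 - \delta^2) $, so that
\[
\|f_\omega - f_{\omega'}\|^2 \geq \frac{\min_{h\in\calH}\|h\|^2 - \delta^2}{4N},
\]
which is exactly the asserted separation after taking square roots. It therefore remains to exhibit $ \Omega \subset \{0,1\}^N $ that is $ (N/4) $-separated in Hamming distance and has the claimed cardinality. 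This is the Varshamov--Gilbert bound: build $ \Omega $ greedily, at each step adjoining any vector lying outside the union of the Hamming balls of radius $ N/4 $ about the already-chosen vectors; since each such ball contains at most $ \sum_{i\leq N/4}\binom{N}{i} \leq 2^{NH(1/4)} $ points, the process continues until at least $ 2^{N}/2^{NH(1/4)} = 2^{(1-H(1/4))N} $ vectors are chosen, and a careful accounting of the boundary spheres produces the stated factor $ 2^{(1-H(1/4))N-1} $.

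I expect the only delicate point to be this last combinatorial step---dominating the volume of a Hamming ball by the binary entropy and pinning down the constant in the exponent---together with checking the small-$ N $ regime allowed by hypothesis (i), where one uses $ \rho \geq 1 $ rather than $ \rho \geq N/4 $. By contrast the passage from the Hilbert norm to Hamming distance is routine once the near-orthogonality hypothesis (ii) is used to dominate the cross terms, and the family $ \calF $ so obtained immediately furnishes the desired lower bound on the packing number $ M_p(\epsilon) $.
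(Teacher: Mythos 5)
The paper does not actually prove this lemma; it is imported verbatim from \cite[Lemma 4.2]{Kurkova2007}, so there is no internal proof to compare against. Your Varshamov--Gilbert argument is the standard route to such near-orthogonal packing bounds and it is correct: the expansion of $\|f_\omega-f_{\omega'}\|^2$ into a diagonal part bounded below by $(\min_{h}\|h\|^2)\,\rho(\omega,\omega')/|\mathcal{H}|^2$ and a cross part bounded in modulus by $\delta^2/|\mathcal{H}|^2$ via hypothesis (ii) is right; the passage from $\rho\geq|\mathcal{H}|/4$ to the stated separation is valid, with the only delicate case $|\mathcal{H}|=3$ handled, as you note, by integrality of $\rho$ together with (iii); and the greedy covering of the cube by Hamming balls of radius below $|\mathcal{H}|/4$, each of volume at most $2^{|\mathcal{H}|H(1/4)}$, already yields $2^{(1-H(1/4))|\mathcal{H}|}$ codewords, a factor of two more than the lemma requires, so no extra boundary accounting is needed.
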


\begin{lemma} \label{lmm:sine}
If $ \theta $ belongs to $ [R]^d = \{1,2,\dots,R\}^d $, $ R \in \mathbb{Z}^{+} $, then the collection of functions 
\begin{equation*}
\mathcal{H} = \{ x \mapsto \sin(\pi\theta\cdot x)/(4\pi\|\theta\|^2_1): \theta \in [R]^d \} 
\end{equation*}
satisfies the assumption of \prettyref{lmm:lower} with $ H = L^2(D,P) $, where $ P $ is the uniform probability measure on $ D $. Moreover, $ |\mathcal{H}| = R^d $, $ \delta = 0 $, $ \min_{h\in \mathcal{H}}\|h\| = 1/(4\sqrt{2}\pi d^2R^2) $, and $ \mathcal{F} \subset \mathcal{F}^1_p $ for all $ p \in [2,+\infty] $.
Consequently, if $ \epsilon = 1/(8\sqrt{2}\pi d^2R^{2+d/2}) $, then 
\begin{align} 
\log M_p(\epsilon) & \geq  (\log 2)(1-H(1/4))\left(8\epsilon \sqrt{2}\pi d^2\right)^{-\frac{2d}{4+d}} - 1 \nonumber \\
& \geq \left(c\epsilon d^2\right)^{-\frac{2d}{4+d}}, \label{eq:packing-lower}
\end{align}
for some universal constant $ c > 0 $.
\end{lemma}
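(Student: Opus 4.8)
The plan is to verify the three hypotheses of \prettyref{lmm:lower} for the rescaled sine family, read off the packing set it produces, and then translate the guaranteed separation distance into the stated lower bound on $ \log M_p(\epsilon) $. All the real content is a pair of elementary Fourier computations in $ L^2(D,P) $ together with one integral representation establishing $ \mathcal{F}\subset\mathcal{F}^1_p $; once these are in hand the conclusion is bookkeeping.

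First I would compute the inner products. Writing $ s_\theta(x)=\sin(\pi\theta\cdot x) $ and using $ \sin A\sin B=\tfrac12[\cos(A-B)-\cos(A+B)] $, the integral $ \frac{1}{2^d}\int_{[-1,1]^d}\cos(\pi k\cdot x)\,dx $ factorizes over coordinates as $ \prod_{j}\frac{\sin(\pi k_j)}{\pi k_j} $ (the factor being $ 1 $ when $ k_j=0 $), which vanishes as soon as some integer $ k_j\neq 0 $. For $ \theta\neq\theta' $ in $ [R]^d $ both $ k=\theta-\theta' $ and $ k=\theta+\theta' $ have a nonzero integer coordinate, so $ \langle s_\theta,s_{\theta'}\rangle=0 $; hence the rescaled family is orthogonal and $ \delta=0 $, which gives hypotheses (ii) and (iii) at once (and $ |\mathcal{H}|=R^d $, so (i) holds once $ R^d\ge 3 $). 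The companion identity $ \sin^2 A=\tfrac12(1-\cos 2A) $ gives $ \|s_\theta\|^2=\tfrac12 $ for every $ \theta $, so $ \|h_\theta\|=\frac{1}{4\sqrt2\,\pi\|\theta\|_1^2} $. This is minimized by maximizing $ \|\theta\|_1 $, whose maximum over $ [R]^d $ is $ \|\theta\|_1=dR $ at $ \theta=(R,\dots,R) $, yielding $ \min_{h}\|h\|=\frac{1}{4\sqrt2\,\pi d^2R^2} $.

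Next I would establish $ \mathcal{F}\subset\mathcal{F}^1_p $. For $ a=\theta/\|\theta\|_1 $ (so $ \|a\|_1=1 $) and $ w=a\cdot x\in[-1,1] $, the fundamental theorem of calculus gives $ \sin(\pi\|\theta\|_1 w)=\pi\|\theta\|_1\int_0^w\cos(\pi\|\theta\|_1 r)\,dr $, and the signed-indicator identity $ \int_0^w g=\int_0^1 g(r)\indc\{a\cdot x>r\}\,dr-\int_{-1}^0 g(r)\indc\{(-a)\cdot x>-r\}\,dr $ exhibits
\begin{equation*}
h_\theta=\frac{1}{4\|\theta\|_1}\Big(\int_0^1\cos(\pi\|\theta\|_1 r)\indc\{a\cdot x>r\}\,dr-\int_{-1}^0\cos(\pi\|\theta\|_1 r)\indc\{(-a)\cdot x>-r\}\,dr\Big)
\end{equation*}
as a continuous signed combination of elements of $ \mathcal{H}_1 $ (all thresholds lie in $ [0,1] $ and $ \|\pm a\|_1=1 $). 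Its total variation is $ \frac{1}{4\|\theta\|_1}\int_{-1}^1|\cos(\pi\|\theta\|_1 r)|\,dr=\frac{1}{\pi\|\theta\|_1} $, using $ \int_0^{n\pi}|\cos|=2n $ with $ n=\|\theta\|_1\in\integers^{+} $. Since $ \mathcal{H}_1 $ is sign-symmetric, $ 0\in\mathrm{conv}(\mathcal{H}_1) $, so any signed combination of total mass $ \le 1 $ lies in $ \overline{\mathrm{conv}}(\mathcal{H}_1)=\mathcal{F}^1_p $; averaging over $ \theta $, each element $ \frac{1}{R^d}\sum_\theta\omega_\theta h_\theta $ of $ \mathcal{F} $ has total mass $ \le\frac{1}{R^d}\sum_\theta\frac{1}{\pi\|\theta\|_1}\le\frac1\pi<1 $, and the Riemann-sum approximants converge in $ L^p(D,P) $ for every $ p\in[2,+\infty] $, giving $ \mathcal{F}\subset\mathcal{F}^1_p $.

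Finally I would invoke \prettyref{lmm:lower}: the produced set has cardinality at least $ 2^{(1-H(1/4))R^d-1} $ and pairwise $ L^2 $ (hence $ L^p $ for $ p\ge 2 $, since $ P $ is a probability measure on $ D $) separation at least $ \tfrac12\sqrt{\min_h\|h\|^2/R^d}=\frac{1}{8\sqrt2\,\pi d^2R^{2+d/2}}=:\epsilon $. Thus $ \log M_p(\epsilon)\ge(\log 2)(1-H(1/4))R^d-\log 2 $, and inverting $ \epsilon=(8\sqrt2\,\pi d^2R^{2+d/2})^{-1} $ gives $ R^d=(8\epsilon\sqrt2\,\pi d^2)^{-2d/(4+d)} $ (using $ d/(2+d/2)=2d/(4+d) $), which is the first displayed inequality after noting $ -\log 2\ge -1 $. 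For the second inequality I would factor $ (8\epsilon\sqrt2\,\pi d^2)^{-2d/(4+d)}=(8\sqrt2\,\pi)^{-2d/(4+d)}(\epsilon d^2)^{-2d/(4+d)} $ and note that, because the exponent $ 2d/(4+d)\in(0,2) $ stays bounded, both $ (8\sqrt2\,\pi)^{-2d/(4+d)} $ and $ (1-H(1/4))\log 2 $ are bounded below by positive absolute constants; absorbing these and the additive $ -1 $ into a single universal $ c $ (valid since $ (\epsilon d^2)^{-2d/(4+d)} $ is large in the relevant regime) yields $ (c\epsilon d^2)^{-2d/(4+d)} $. The only genuinely delicate point is making this last constant absorption uniform in $ d $ despite the $ d $-dependent exponent; the orthogonality and representation steps are routine.
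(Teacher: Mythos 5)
Your proof is correct, and its skeleton (verify (i)--(iii) of \prettyref{lmm:lower}, read off the separation, invert $\epsilon\mapsto R$) matches the paper's; the orthogonality and norm computations, the separation and cardinality bookkeeping, and the inversion $R^d=(8\sqrt2\pi\epsilon d^2)^{-2d/(4+d)}$ all check out, including your explicit caveat that absorbing the additive $-1$ into the constant $c$ requires $R^d$ (equivalently $(\epsilon d^2)^{-2d/(4+d)}$) to exceed a fixed threshold --- an implicit restriction the paper also needs and does not state. The one genuinely different step is the containment $\mathcal{F}\subset\mathcal{F}^1_p$: you integrate the sine once and exhibit each $h_\theta$ as a signed mixture of step functions from $\mathcal{H}_1$ of total mass $1/(\pi\|\theta\|_1)\le 1/\pi$, which proves the containment exactly as literally stated. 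The paper instead uses a second-order Taylor identity with integral remainder to write $h_\theta$ as a linear term plus a mixture of ReLU functions from $\mathcal{H}_2$ against the density $\tfrac{\pi}{4}|\sin(\pi\|\theta\|_1 t)|$ --- that is, it really establishes (essentially) $\mathcal{F}\subset\mathcal{F}^2_p$, and the superscript $1$ in the statement appears to be a typo for $2$: the ``consequently'' clause lower-bounds $M_p(\epsilon)$, which is defined as a packing number of $\mathcal{F}^s_p$, and the application in \prettyref{thm:lower} needs the packing set to sit inside $\mathcal{F}^2_p$. So your route proves the lemma as written but would not plug directly into the proof of \prettyref{thm:lower}; to recover what is actually used there you would integrate once more, at the cost of an extra linear term $\theta\cdot x/(4\|\theta\|_1^2)$ that must either be folded into the hull via $a\cdot x=(a\cdot x)_+-(-a\cdot x)_+$ (with a small total-mass accounting) or handled separately, exactly as in the proofs of \prettyref{thm:ramp} and \prettyref{thm:smooth_spline}. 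Apart from this mismatch of which dictionary $\mathcal{H}_s$ is targeted --- which traces back to an ambiguity in the statement itself --- your argument is complete and, on the orthogonality and containment steps, more detailed than the paper's.
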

\begin{proof}
We first observe the identity
\begin{align*}
& \sin(\pi\theta\cdot x)/(4\pi\|\theta\|^2_1) = \theta\cdot x/(4\pi\|\theta\|^2_1) + \\ & \qquad \frac{\pi}{4}\int_{0}^{1}[(- a\cdot x-t)_{+}-( a\cdot x-t)_{+}]\sin(\pi\|\theta\|_1t)dt,
\end{align*}
where $  a =  a(\theta) = \theta/\|\theta\|_1 $. Note that above integral can also be written as an expectation of 
\begin{equation*}
-z \ \text{sgn}(\sin(\pi\|\theta\|_1t)) \ (z a\cdot x - t)_{+} \in \mathcal{H}_2
\end{equation*}
with respect to the density 
\begin{align*}
p_{\theta}(z,t) = \frac{\pi}{4}|\sin(\pi \|\theta\|_1t)|,
\end{align*}
on $ \{-1,1\}\times[0,1] $. The fact that $ p_{\theta} $ integrates to one is a consequence of the identity
\begin{equation*}
\int_0^1|\sin(\pi\|\theta\|_1t)|dt = 2/\pi.
\end{equation*}
Since $ \int_{D}|\sin(\pi\theta\cdot x)|^2dP(x) = 1/2 $, each member of $ \mathcal{H} $ has norm equal to $ 1/(4\sqrt{2}\pi\|\theta\|^2_1) $ and each pair of elements is orthogonal so that $ \delta = 0 $. Integrations over $ D $ involving $ \sin(\pi\theta\cdot x) $ are easiest to see using an instance of Euler's formula, viz., $ \sin(\alpha\cdot x) = \frac{1}{2 i}(\prod_{k=1}^de^{i\alpha(k) x(k) } - \prod_{k=1}^de^{-i\alpha(k)x(k)}) $.
\end{proof}

\begin{proof}[Proof of \prettyref{thm:lower}]
Let $ A > 0 $ be arbitrary. Suppose contrary to the hypothesis,
\begin{align*}
\inf_{f_m\in\mathcal{C}^2_m}\sup_{f\in\mathcal{F}^2_p}\|f-f_m\|_p & < (Amd^5\log(md))^{-1/2-2/d} \\ 
& \triangleq \epsilon_0/3.
\end{align*}

Note that each element of $ \mathcal{C}^2_m $ has the form $ \sum_{k=1}^m \lambda_k h_k $, where $ \sum_{k=1}^m \lambda_k = 1 $ and $ h_k \in \mathcal{H}_s $. Next, consider the subcollection $ \widetilde{\mathcal{C}}^2_m $ with elements of the form $ \sum_{k=1}^m \widetilde{\lambda}_k \widetilde{h}_k $, where $ \widetilde{\lambda}_k $ belongs to an $ \epsilon_0/3 $-net $ \widetilde{\mathcal{P}} $ of the $ m-1 $ dimensional probability simplex $ \mathcal{P}_m $ and $ \widetilde{h}_k $ belongs to an $ \epsilon_0/3 $-net $ \widetilde{\mathcal{H}} $ of $ \mathcal{H}_s $.
By a stars and bars argument, there are at most $ |\widetilde{\mathcal{P}}|\binom{m+|\mathcal{H}|-1}{m} $ such functions. Furthermore, since $ \sup_{h \in\mathcal{H}_s}\| h \|_{\infty} \leq 1$, we have
\begin{align*}
\inf_{f_m\in\widetilde{\mathcal{C}}^2_m}\sup_{f\in\mathcal{F}^2_p}\|f-f_m\|_2 & \leq \inf_{f_m\in\mathcal{C}^2_m}\sup_{f\in\mathcal{F}^2_p}\|f-f_m\|_2 + \\ & \qquad \inf_{\widetilde{h} \in \widetilde{\mathcal{H}}}\sup_{h\in\mathcal{H}_s}\|h-\widetilde{h} \|_2 + \\ & \qquad\qquad \inf_{\widetilde{\lambda} \in \widetilde{\mathcal{P}}}\sup_{\lambda \in \mathcal{P}_m}\|\lambda - \widetilde{\lambda} \|_1 \\
& < \epsilon_0/3 + \epsilon_0/3 + \epsilon_0/3 = \epsilon_0.
\end{align*}
Since $ |\widetilde{\mathcal{H}}| \asymp \epsilon^{-d-1}_0 $ and $ |\widetilde{\mathcal{P}}| \asymp \epsilon_0^{-m+1} $, it follows that
\begin{align}
\log N_p(\epsilon_0) & \leq \log |\widetilde{\mathcal{C}}^2_m| \nonumber \\
& \leq c_0\log \left[\epsilon_0^{-m-1}\binom{m+c_1\epsilon_0^{-d-1}-1}{m} \right] \nonumber \\
& \leq c_2dm\log(1/\epsilon_0) \nonumber \\
& \leq c_3dm\log(Adm), \label{eq:upperb}
\end{align}
for some positive universal constants $ c_0 > 0 $, $ c_1 > 0 $, $ c_2 > 0 $, and $ c_3 > 0 $.

On the other hand, using \prettyref{eq:packing-lower} from \prettyref{lmm:sine} coupled with the fact that $ N_p(\epsilon_0) \geq M_p(2\epsilon_0) $, we have
\begin{align}
\log N_p(\epsilon_0) & \geq \log M_p(2\epsilon_0) \nonumber \\ 
& \geq \left(2c\epsilon_0 d^2\right)^{-\frac{2d}{4+d}} \nonumber \\ 
& \geq c_4Adm\log(dm), \label{eq:lowerb}
\end{align}
for some universal constant $ c_4 > 0 $.
Combining \prettyref{eq:upperb} and \prettyref{eq:lowerb}, we find that
\begin{equation*}
c_4 Adm\log(dm) \leq c_3dm\log(Adm).
\end{equation*}
If $ A $ is large enough (independent of $ m $ or $ d $), we reach a contradiction. This proves the lower bound.
\end{proof}

\section{$ L^2 $ approximation with bounded $ \ell^0 $ and $ \ell^1 $ norm} \label{sec:squared}
In \prettyref{sec:uniform}, we explored conditions for which good approximation in $ L^{\infty}(D) $ could be achieved even with $ \ell^1 $ controls on the inner parameter vectors. In this section, we show how similar statements can be made in $ L^2(D) $, but with control on the $ \ell^0 $ norm as well. Note that unlike \prettyref{thm:general}, we see in \prettyref{thm:sparse} how the smoothness of the activation function directly affects the rate of approximation. The proof is obtained by applying the Jones-Barron probabilistic method in two stages (similar to two-stage cluster sampling), first on the outer layer coefficients, and then on the inner layer coefficients.

\begin{theorem} \label{thm:sparse}
Suppose $ f $ admits an integral representation
\begin{equation*}
f(x) = v\int_{[0,1]\times \{ a:\| a\|_1 = 1\}}\eta(t, a) \ ( a\cdot x - t)^{s-1}_{+}dP(t, a),
\end{equation*}
for $ x $ in $ D = [-1,1]^d $ and $ s \in \{ 2, 3 \} $,
where $ P $ is a probability measure on $ [0,1]\times \{ a \in \mathbb{R}^d:\| a\|_1 = 1\} $ and $ \eta(t, a) $ is either $ -1 $ or $ +1 $. 
There exists a linear combination of ridge functions of the form
\begin{equation*}
f_{m, m_0}(x) = \frac{v}{m}\sum_{k=1}^{m} b_k\left(a_k \cdot x-t_k\right)^{s-1}_{+},
\end{equation*}
where $ \|a_k\|_0 \leq m_0 $, $ \|a_k\|_1 = 1 $, and $  b_k \in \{-1, +1\} $ such that
\begin{equation*}
\| f -  f_{m, m_0} \|_2 \leq v\sqrt{\frac{1}{m} + \frac{1}{m^{s-1}_0}}.
\end{equation*}
Furthermore, the same rates for $ s = 2 $ or $ s = 3 $ are achieved for general $ f $ adjusted by a linear or quadratic term with $ v = 2v_{f, 2} < +\infty $ or $ v = v_{f, 3} < +\infty $, respectively.
\end{theorem}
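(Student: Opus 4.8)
The plan is to realize $f$ as the mean of a randomized \emph{sparse} ridge atom and then extract a good deterministic approximant by the probabilistic method, but now sampling in two stages. In the first (outer) stage I would draw $(t_k,a_k)$, $k=1,\dots,m$, \iid from $P$, exactly as in the single-stage construction behind \prettyref{thm:general}. In the second (inner) stage, for each sampled $a_k$ I would replace it by a sparse surrogate: since $\|a_k\|_1=1$, the linear form $a_k\cdot x=\sum_j a_k(j)x(j)$ is the mean of $\mathrm{sgn}(a_k(J))\,x(J)$ when the coordinate $J$ is drawn with probability $|a_k(j)|$. Drawing $m_0$ \iid coordinates $J_1,\dots,J_{m_0}$ this way and setting $\tilde a_k=\frac{1}{m_0}\sum_{i=1}^{m_0}\mathrm{sgn}(a_k(J_i))\,e_{J_i}$ yields a vector with $\|\tilde a_k\|_0\le m_0$ and $\|\tilde a_k\|_1=1$ (the signs are consistent within each coordinate), and with $\Expect[\tilde a_k\cdot x]=a_k\cdot x$. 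The candidate approximant is $\hat f(x)=\frac{v}{m}\sum_{k=1}^m \eta(t_k,a_k)(\tilde a_k\cdot x-t_k)^{s-1}_+$, which already has the required form with $b_k=\eta(t_k,a_k)\in\{-1,+1\}$.

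I would then bound $\Expect\|f-\hat f\|_2^2$, taken over all the randomness, by the bias--variance split $\|f-\Expect\hat f\|_2^2+\int_D\Var(\hat f(x))\,dP(x)$. The variance is the easy half: the summands are \iid across $k$, so this term equals $\frac{v^2}{m}\int_D\Var(W(x))\,dP(x)$ with $W(x)=\eta(t,a)(\tilde a\cdot x-t)^{s-1}_+$, and I would use $\Var\le\Expect[W^2]$ together with the pointwise bounds $(u-t)_+\le u_+$ (valid for $t\ge0$) and $u_+^{2(s-1)}\le u^{2(s-1)}$, plus the boundedness $|\tilde a\cdot x|\le\|\tilde a\|_1\le1$, to get $\int_D(\tilde a\cdot x-t)^{2(s-1)}_+\,dP(x)\le \Expect_P[(\tilde a\cdot x)^{2(s-1)}]\le \tfrac13\|\tilde a\|_2^2\le 1$, where the middle equality for $s=2$ uses that $P$ is uniform on $D$. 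Hence this contribution is at most $v^2/m$.

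The crux is the bias, and this is where the activation's smoothness enters. Because of the nonlinearity, the inner surrogate is \emph{not} unbiased at the level of the ridge function: $\Expect_{\mathrm{coords}}[(\tilde a\cdot x-t)^{s-1}_+]\neq (a\cdot x-t)^{s-1}_+$. Writing $\bar Z=\tilde a\cdot x$, which has mean $a\cdot x$ and variance at most $1/m_0$, the Jensen gap is what must be controlled. For $s=2$ the $1$-Lipschitzness of $(\cdot)_+$ gives $|\Expect_{\mathrm{coords}}(\bar Z-t)_+-(a\cdot x-t)_+|\le \Expect|\bar Z-a\cdot x|\le \sqrt{\Var \bar Z}\le m_0^{-1/2}$, whereas for $s=3$ a second-order Taylor estimate, using that $(\cdot)^2_+$ has derivative of slope at most $2$, bounds the gap by $\Var \bar Z\le m_0^{-1}$. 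Averaging over the outer draw and integrating over $D$ gives $\|f-\Expect\hat f\|_2^2\le v^2 m_0^{-(s-1)}$, which is exactly the second term of the claimed bound and explains why the inner sparsity enters as $m_0^{-(s-1)}$.

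Combining the two estimates yields $\Expect\|f-\hat f\|_2^2\le v^2(m^{-1}+m_0^{-(s-1)})$, so by Jensen's inequality $\Expect\|f-\hat f\|_2\le v\sqrt{m^{-1}+m_0^{-(s-1)}}$ and some realization $f_{m,m_0}$ attains this bound; by construction it has $b_k\in\{-1,+1\}$, $\|a_k\|_1=1$, and $\|a_k\|_0\le m_0$. For general $f$ with finite $v_{f,2}$ or $v_{f,3}$, I would invoke the integral representations built in the proofs of \prettyref{thm:ramp} and \prettyref{thm:smooth_spline} (and \prettyref{rmk:sample}), which express $f$, modulo a linear or quadratic polynomial, in the form \prettyref{eq:integral} with the stated value of $v$, and then apply the construction above. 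The main obstacle is precisely the bias analysis: unlike the single-stage argument of \prettyref{thm:general}, unbiasedness fails after sparsification, and matching the inner rate $m_0^{-(s-1)}$ to the order of smoothness of the activation (Lipschitz for $s=2$, Lipschitz derivative for $s=3$) is the delicate point.
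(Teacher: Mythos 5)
Your proposal is correct and follows essentially the same route as the paper's proof: the identical two-stage (outer then sparsifying inner) sampling scheme, the same bias--variance decomposition, the Lipschitz bound for $s=2$, and the second-order estimate (exploiting $\mathbb{E}[\tilde a_k\cdot x]=a_k\cdot x$ to kill the linear term, as in \prettyref{eq:T2}) for $s=3$. The only differences are cosmetic details in bounding the variance term by $v^2/m$.
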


\begin{remark}
In particular, taking $ m_0 = \sqrt{m} $, it follows that there exists an $ m $-term linear combination of squared ReLU ridge functions, with $ \sqrt{m} $-sparse inner parameter vectors, that approximates $ f $ with $ L^2(D) $ error at most $ \sqrt{2}vm^{-1/2} $. In other words, the $ L^2(D) $ approximation error is inversely proportional to the inner layer sparsity and it need only be sublinear in the outer layer sparsity.
\end{remark}

\begin{proof}
Take a random sample $  \underline{a} = \{ (t_k,  a_k)^{\prime} \}_{1 \leq k\leq m} $ from $ P $. Given $  \underline{a} $, take a random sample $ \underline{\widetilde{a}} = \{ \widetilde{a}_{\ell, k} \}_{1 \leq \ell \leq m_0, \  1 \leq k \leq m} $, where $ \mathbb{P}[ \widetilde{a}_{\ell, k} = \text{sgn}( a_k(j))e_j ] = |a_k(j)| $ for $ j = 1, \dots, d $, $ a_k = (a_k(1), \dots, a_k(d))^{\prime} $, and $ e_j $ is the $j$-th standard basis vector for $ \mathbb{R}^d $. Note that 
\begin{equation} \label{eq:mean}
\mathbb{E}_{\underline{\widetilde{a}}\mid  \underline{a}}[\widetilde{a}_{\ell, k}] = a_k
\end{equation}
and 
\begin{align} 
\Var_{\underline{\widetilde{a}}\mid  \underline{a}}[\widetilde{a}_{\ell, k}\cdot x] & \leq \mathbb{E}_{\underline{\widetilde{a}}\mid  \underline{a}}[\widetilde{a}_{\ell, k}\cdot x]^2 = \sum_{j=1}^d |a_k(j)||x(j)|^2 \nonumber \\ & \leq \|a_k\|_1 \|x\|^2_{\infty} \leq 1. \label{eq:var}
\end{align}
Define
\begin{equation} \label{eq:linear2}
 \overline{f}_{m, m_0}(x) = \frac{v}{m}\sum_{k=1}^{m}\eta(t_k,  a_k) \left(\frac{1}{m_0}\sum_{\ell = 1}^{m_0}\widetilde{a}_{\ell, k} \cdot x-t_k\right)^{s-1}_{+}.
\end{equation}
By the bias-variance decomposition,
\begin{equation*}
\mathbb{E}\| f -  \overline{f}_{m, m_0}\|^2_2 = \mathbb{E} \|  \overline{f}_{m, m_0} -  \mathbb{E}\overline{f}_{m, m_0}\|^2_2 + \| f -  \mathbb{E}\overline{f}_{m, m_0}\|^2_2.
\end{equation*}
Note that $ \mathbb{E} \|  \overline{f}_{m, m_0} -  \mathbb{E}\overline{f}_{m, m_0}\|^2_2 \leq \frac{v^2}{m} $. Next, observe that
\begin{align*}
& f(x) - \mathbb{E}\overline{f}_{m, m_0}(x)
 = \frac{v}{m}\sum_{k=1}^{m}\mathbb{E}_{ \underline{a}}\Bigg[\eta(t_k,  a_k) \times \\ & \mathbb{E}_{\underline{\widetilde{a}}\mid  \underline{a}}\left( \left( a_k \cdot x-t_k\right)^{s-1}_{+} - \left(\frac{1}{m_0}\sum_{\ell = 1}^{m_0}\widetilde{a}_{\ell, k} \cdot x-t_k\right)^{s-1}_{+}\right)\Bigg],
\end{align*}
which, by an application of the triangle inequality, implies that
\begin{align*}
& |f(x) - \mathbb{E}\overline{f}_{m, m_0}(x)| \leq \frac{v}{m}\sum_{k=1}^{m} \\ & 
\mathbb{E}_{ \underline{a}}\left|\left( a_k \cdot x-t_k\right)^{s-1}_{+} - \mathbb{E}_{\underline{\widetilde{a}}\mid  \underline{a}}\left(\frac{1}{m_0}\sum_{\ell = 1}^{m_0}\widetilde{a}_{\ell, k} \cdot x-t_k\right)^{s-1}_{+}\right|.
\end{align*}
Next, we use the following two properties of $ (z)^{s-1}_{+} $: for all $ z $ and $ z' $ in $ \mathbb{R} $,
\begin{align}
\label{eq:T1} |(z)_+ - (z')_+| & \leq |z-z'|, \\
\label{eq:T2} |(z)^2_+ - (z')^2_+ - 2(z-z')(z')_+ | & \leq |z-z'|^2.
\end{align}
If $ s = 2 $, we have by \prettyref{eq:T1}, \prettyref{eq:mean}, and \prettyref{eq:var} that
\begin{align*}
& \mathbb{E}_{ \underline{a}}\left|\left( a_k \cdot x-t_k\right)_{+} - \mathbb{E}_{\underline{\widetilde{a}}\mid  \underline{a}}\left(\frac{1}{m_0}\sum_{\ell = 1}^{m_0}\widetilde{a}_{\ell, k} \cdot x-t_k\right)_{+}\right| \leq \\ & \mathbb{E}_{ \underline{a}}\mathbb{E}_{\underline{\widetilde{a}}\mid  \underline{a}}\left| a_k \cdot x - \frac{1}{m_0}\sum_{\ell = 1}^{m_0}\widetilde{a}_{\ell, k} \cdot x \right| \leq \\ & \mathbb{E}_{ \underline{a}}\sqrt{\mathbb{E}_{\underline{\widetilde{a}}\mid  \underline{a}}\left| a_k \cdot x - \frac{1}{m_0}\sum_{\ell = 1}^{m_0}\widetilde{a}_{\ell, k} \cdot x \right|^2} = \\ &
 \mathbb{E}_{ \underline{a}}\sqrt{\frac{\Var_{\underline{\widetilde{a}}\mid  \underline{a}}[\widetilde{a}_{\ell, k}\cdot x]}{m_0}}  \leq \frac{1}{\sqrt{m_0}}.
\end{align*}
This shows that $ \| f -  \mathbb{E}\overline{f}_{m, m_0}\|^2_2 \leq \frac{v^2}{m_0} $.
If $ s = 3 $, we have from \prettyref{eq:T2}, \prettyref{eq:mean}, and \prettyref{eq:var} that
\begin{align*}
& \mathbb{E}_{ \underline{a}}\left|\left( a_k \cdot x-t_k\right)^2_{+} - \mathbb{E}_{\underline{\widetilde{a}}\mid  \underline{a}}\left(\frac{1}{m_0}\sum_{\ell = 1}^{m_0}\widetilde{a}_{\ell, k} \cdot x-t_k\right)^2_{+}\right| \leq \\ & \mathbb{E}_{ \underline{a}}\mathbb{E}_{\underline{\widetilde{a}}\mid  \underline{a}}\left| a_k \cdot x - \frac{1}{m_0}\sum_{\ell = 1}^{m_0}{\widetilde{a}}_{\ell, k} \cdot x \right|^2 = \\ &
 \mathbb{E}_{ \underline{a}}\left[\frac{\Var_{\underline{\widetilde{a}}\mid  \underline{a}}[\widetilde{a}_{\ell, k}\cdot x]}{m_0}\right]  \leq \frac{1}{m_0}.
\end{align*}
This shows that $ \| f -  \mathbb{E}\overline{f}_{m, m_0}\|^2_2 \leq \frac{v^2}{m^2_0} $. Since these bounds hold on average, there exists a realization of \prettyref{eq:linear2} for which the bounds are also valid. Note that the vector $ \frac{1}{m_0}\sum_{\ell = 1}^{m_0}\widetilde{a}_{\ell, k} $ has $ \ell^0 $ norm at most $ m_0 $ and unit $ \ell^1 $ norm.

The fact that the bounds also hold for $ f $ adjusted by a linear or quadratic term (under an assumption of finite $ v_{f, 2} $ or $ v_{f, 3} $) follows from \prettyref{eq:ramp_rep} and \prettyref{eq:smooth_spline_rep}.
\end{proof}

\section*{Acknowledgment}

The authors would like to thank the anonymous reviewers whose detailed feedback led to dramatic improvements to this paper. They also thank Joowon Kim for helpful comments on earlier drafts of this manuscript. 

\bibliographystyle{plain}
\bibliography{Uniform2016References_Final}

\end{document}